\documentclass[lettersize,journal]{IEEEtran}
\usepackage{amsmath,amsfonts}
\usepackage{algorithmic}
\usepackage{algorithm}
\usepackage{array}
\usepackage[caption=false,font=normalsize,labelfont=sf,textfont=sf]{subfig}
\usepackage{textcomp}
\usepackage{stfloats}
\usepackage{url}
\usepackage{verbatim}
\usepackage{graphicx}
\usepackage{cite}

\usepackage{mathtools}
\usepackage{bm}
\usepackage{amssymb}
\usepackage{xcolor}
\usepackage{xparse}
\usepackage{booktabs}
\usepackage{balance}

\usepackage{tikz}
\usetikzlibrary{arrows.meta,positioning}

\usepackage{pgfplots}
\pgfplotsset{compat=1.18}
\usepgfplotslibrary{statistics}

\usepackage{amsthm}
\theoremstyle{plain}
\newtheorem{proposition}{Proposition}
\newtheorem{corollary}{Corollary}

\graphicspath{{figures/}}

\newlength{\cmpwlen}
\newcommand{\cmpsetup}[1]{%
  \setlength{\cmpwlen}{#1\textwidth}%
  \setlength{\tabcolsep}{0pt}%
  \renewcommand{\arraystretch}{0.35}%
}

\newcommand{\cmpimg}[1]{\includegraphics[width=\cmpwlen]{#1}}

\newcommand{\cmppsnr}[2][\cmpwlen]{%
  \makebox[#1][c]{\tiny #2}%
}

\newcommand{\cmpoplabel}[1]{%
  \raisebox{1.15cm}{\rotatebox[origin=c]{90}{#1}}%
}

\newcommand{\cmprowSeven}[7]{%
  \cmpimg{#1/truth.png} &
  \cmpimg{#1/obs.png} &
  \cmpimg{#1/pigdm_nom.png} &
  \cmpimg{#1/pigdm_o1.png} &
  \cmpimg{#1/pigdm_o2.png} &
  \cmpimg{#1/dps.png} &
  \cmpimg{#1/bgdm.png} \\
  & \cmppsnr{#2} &
  \cmppsnr{#3} &
  \cmppsnr{#4} &
  \cmppsnr{#5} &
  \cmppsnr{#6} &
  \cmppsnr{#7} \\[-0.3pt]%
}

\NewDocumentCommand{\cmprowFive}{m m m m g}{%
  \IfNoValueTF{#5}{%
    \cmpimg{#1/truth.png} &
    \cmpimg{#1/obs.png} &
    \cmpimg{#1/pigdm_nom.png} &
    \cmpimg{#1/dps.png} &
    \cmpimg{#1/bgdm.png} \\
    & & \cmppsnr{#2} &
    \cmppsnr{#3} &
    \cmppsnr{#4} \\[1.5pt]%
  }{%
    \cmpimg{#1/truth.png} &
    \cmpimg{#1/obs.png} &
    \cmpimg{#1/pigdm_o2.png} &
    \cmpimg{#1/dps.png} &
    \cmpimg{#1/bgdm.png} \\
    & \cmppsnr{#2} &
    \cmppsnr{#3} &
    \cmppsnr{#4} &
    \cmppsnr{#5} \\[-0.3pt]%
  }%
}

\newcommand{\cmprowSevenOp}[8]{%
  \cmpoplabel{#1} &
  \cmpimg{#2/truth.png} &
  \cmpimg{#2/obs.png} &
  \cmpimg{#2/pigdm_nom.png} &
  \cmpimg{#2/pigdm_o1.png} &
  \cmpimg{#2/pigdm_o2.png} &
  \cmpimg{#2/dps.png} &
  \cmpimg{#2/bgdm.png} \\
  & & \cmppsnr{#3} &
  \cmppsnr{#4} &
  \cmppsnr{#5} &
  \cmppsnr{#6} &
  \cmppsnr{#7} &
  \cmppsnr{#8} \\[-0.3pt]%
}

\definecolor{cPrior}{HTML}{6B7280}  % slate gray
\definecolor{cMeas}{HTML}{A23B4E}   % brick
\definecolor{cScore}{HTML}{2B5D8A}  % steel blue

\newcommand{\xz}{\boldsymbol{x_0}}
\newcommand{\xt}{\boldsymbol{x_t}}
\newcommand{\xhz}{\boldsymbol{\hat x_0}}

\newcommand{\Real}{\mathbb{R}}
\newcommand{\Norm}{\mathcal{N}}
\newcommand{\Gam}{\mathcal{G}}
\newcommand{\Exp}{\mathbb{E}}
\newcommand{\Iden}{\boldsymbol{I}}

\newcommand{\abart}{\bar\alpha_t}
\newcommand{\eps}{\varepsilon}
\newcommand{\KL}{\mathrm{KL}}
\newcommand{\Fq}{\mathcal{F}}

\newcommand{\yy}{\boldsymbol{y}}
\newcommand{\Ab}{\boldsymbol{A}}

\def\BibTeX{{\rm B\kern-.05em{\sc i\kern-.025em b}\kern-.08em
    T\kern-.1667em\lower.7ex\hbox{E}\kern-.125emX}}

\begin{document}
\title{FB-GDM: Fully-Bayesian Guided Diffusion Models for High-Dimensional Linear Inverse Problems via Unsupervised Variational Inference}
\author{Gatien~Séguy and Thomas~Rodet%
\thanks{SATIE Laboratory, ENS Paris-Saclay, CNRS,
Université Paris-Saclay, 91190 Gif-sur-Yvette, France 
(e-mail: firstname.lastname@ens-paris-saclay.fr).}}

\markboth{IEEE Transactions on Image Processing}%
{Séguy \MakeLowercase{\textit{and}} Rodet: FB-GDM: Fully-Bayesian Guided Diffusion Models}

\maketitle

%=======================================
%+========== ABSTRACT =====================
%=======================================
\begin{abstract}
Diffusion models are powerful priors for linear inverse problems, but the reference guidance methods, Diffusion Posterior Sampling (DPS) and Pseudoinverse-Guided Diffusion Models ($\Pi$GDM), rely on scalar hyperparameters tuned per task, usually against the ground truth.
We introduce FB-GDM, a fully-Bayesian guided diffusion method that removes this calibration step.
Starting from the Gaussian approximation of $\Pi$GDM, we derive a closed-form conditional score that depends on two precision parameters (inverse variances), one associated with the denoising approximation and one with the observation likelihood, and treat them as latent variables inferred by variational inference at each reverse step.
A separable factorization makes each update scale linearly with the number of pixels, so the inference stays tractable at full image resolution, at a cost comparable to one $\Pi$GDM run.
FB-GDM requires neither the noise level nor the ground truth: its only inputs are the observation and the forward operator. Experiments on CelebA-HQ inverse problems establish two results. 
(i) The precision parameters, inferred from the observation alone, allow FB-GDM to outperform $\Pi$GDM at its nominal setting, even when the latter is given the true noise level, by up to 14~dB depending on the operator, and to match the ground-truth-calibrated $\Pi$GDM oracle within 0.1~dB.
(ii) FB-GDM is robust when the forward operator, the noise level, or the image distribution changes: it stays close to a per-problem $\Pi$GDM oracle throughout and does not exhibit the hallucinations observed with DPS, whereas DPS substantially degrades at a fixed scale and $\Pi$GDM stays competitive only if it is re-tuned against the ground truth for each new problem.
When the prior is applied to images outside its training set, this re-balancing between data and prior keeps FB-GDM faithful where
a fixed face-prior guidance can otherwise hallucinate.
\end{abstract}
\begin{IEEEkeywords}
High-dimensional linear inverse problems, diffusion models, score-based generative models, Bayesian inference, variational inference, image reconstruction, unsupervised hyperparameter estimation, out-of-distribution robustness.
\end{IEEEkeywords}
%=======================================
%+========== INTRO =====================
%=======================================
\section{Introduction}
\IEEEPARstart{D}{iffusion} models have established themselves as powerful unconditional image generators \cite{dhariwal2021diffusion,chen2024overview}. They define a noise chain allowing them to progressively move from natural images to an image containing only additive white Gaussian noise, then learn to reverse this chain by estimating the score $\nabla_{\xt} \log p(\xt)$ through a denoising loss \cite{hyvarinen2005estimation, vincent2011connection, ho2020ddpm,song2021sde}. Generation is then performed by numerically integrating the associated reverse stochastic differential equation \cite{song2021sde,karras2022elucidating}. More precisely, a neural network is trained to predict the noise added to the image at each step, starting from a training set of natural  images to which noise is gradually added. This diffusion model has made it possible to generate images of a quality previously out of reach for earlier generative models (Gaussian mixtures, GANs, variational autoencoders). Diffusion models now underpin the leading text-to-image systems \cite{rombach2022stablediff, ramesh2022dalle2, saharia2022imagen,dhariwal2021diffusion} and their multimodal extensions to video \cite{ho2022videodm} and audio \cite{kong2021diffwave}.\\
\indent The same ability to model complex image distributions
motivates a second, more demanding use: as a prior in a linear inverse
problem. The central question is then the trade-off between the
information carried by the data, through the likelihood, and the
information learned from training, through the prior; the goal is to
set this trade-off automatically, from the observation alone.
Diffusion priors have already been deployed across scientific imaging
modalities, such as accelerated MRI \cite{jalal2021robust,
chung2022scoremri, song2022solving} or astronomical imaging
\cite{feng2023score, sun2024deep}. These diffusion-based inversion
methods, called conditional approaches, require the conditional score. This score has no exact expression, and several methods approximate it \cite{chung2023dps, song2023pigdm, kawar2022ddrm, wang2023ddnm, zhu2023diffpir, mardani2024reddiff, chung2022mcg, rout2023psld, feng2023score}. DPS \cite{chung2023dps} and $\Pi$GDM \cite{song2023pigdm} have recently established themselves as two reference approaches.\\
\indent However, both methods share a practical limitation. Their performance depends critically on scalar hyperparameters tuned case by case: DPS introduces $\zeta'$, $\Pi$GDM the pair $(\sigma_b^2, w)$. These settings vary with the task, the noise level and the data, with no calibration procedure in the absence of ground truth, a point raised by several works \cite{chung2023dps, song2023pigdm, pandey2024fast, feng2023score}.\\
\indent In the scientific settings that motivate these priors, such as medical or astronomical imaging, the ground truth is often unavailable and the noise level is rarely known in advance: a method tuned against them cannot be deployed there. A principled alternative evaluates the diffusion prior exactly through the probability-flow ODE~\cite{feng2023score}: it is robust out of distribution, but each evaluation requires an ODE solve, which restricts the approach to low-dimensional problems.\\
\indent Tuning hyperparameters without ground truth is, however, not a new problem. Classical inverse problems offer several possible approaches to estimate their hyperparameters: the L-curve \cite{hansen1992lcurve}, the generalized cross-validation \cite{golub1979cv, golub1997generalized}, or the maximization of the marginal likelihood \cite{mackay1992bayesian, molina1999bayesian}. More recently, fully Bayesian approaches \cite{giovannelli2007unsupervised,mohammad1996full,chaari2011parameter,chouzenoux2024sparse,dobigeon2009joint,woodbury2000full} treat the hyperparameters as random variables and estimate them jointly with the target. The joint posterior distribution then has no explicit form. It must therefore be approximated. Bayesian variational inference is one of these approximations \cite{quinn2006variational}, and fast variants exist \cite{fraysse2014measure, zheng2015efficient}.\\
\indent For solving linear inverse problems, data-driven approaches yield excellent results but are difficult to interpret in terms of information. Classical Bayesian inversion provides theoretical guarantees, but its priors are too generic to encode complex image structure. Conditional diffusion models offer the best of both worlds.
We propose FB-GDM (Fully-Bayesian Guided Diffusion Models), a generalization of $\Pi$GDM \cite{song2023pigdm}, one of the strongest and most widely used guidance methods for linear inverse problems. Its hyperparameters can be interpreted as precisions attached to the Gaussian denoising approximation and to the Gaussian guidance likelihood. FB-GDM treats these two precision parameters (the inverse variances of the denoising approximation and of the observation likelihood) as unknown random variables, and estimates them at each reverse step with the fast variational inference of~\cite{zheng2015efficient}, instead of fixing them by hand.\\
\indent We aim for guidance that needs no task-specific tuning. For this, we establish a new closed-form expression of the posterior score. At each reverse step, the Gaussian denoising approximation and the Gaussian observation likelihood define two local sources of information: one coming from the learned diffusion prior through $\xhz(\xt)$, and one coming from the measurements through $\yy$. FB-GDM automatically balances these two sources by estimating both precisions from the data, with no value set by hand. The resulting procedure requires no task-specific tuning: the user provides $\yy$ and $\Ab$, nothing else, and needs no expertise in variational inference.\\
\indent Moreover, unlike exact-prior evaluation~\cite{feng2023score}, this approach scales to high-dimensional images: a separable factorization (Section~\ref{sec:methode:separable})  makes the cost of each variational update grow only linearly with the number of pixels, so one FB-GDM run costs about as much as a single $\Pi$GDM run at fixed parameters.\\
\indent Finally, the automatic balance between the data and the contribution of the prior limits hallucinations in cases outside the training distribution.\\
\indent The rest of the paper is organized as follows. Section~\ref{sec:background} provides background on inverse problems, diffusion models and their conditional use, including $\Pi$GDM. Section~\ref{sec:methode} develops the proposed method: the closed-form expression of the conditional score, followed by its variational Bayesian estimation. Section~\ref{sec:experiments} reports the experiments and shows that the method stays close to a $\Pi$GDM oracle tuned on the ground truth in distribution, while remaining robust to changes of operator, noise level, and image distribution.
\section{Background}
\label{sec:background}
\subsection{Inverse problems}
We consider a linear inverse problem corrupted by additive white Gaussian noise. Therefore, we have 
\begin{equation}
\yy = \Ab \xz + \boldsymbol{b},
\qquad \boldsymbol{b}\sim\Norm(0,\sigma_b^2\Iden),
\label{eq:inverse_pb}
\end{equation}
with $\Ab \in \Real^{m \times n}$ known. The scalar $\sigma_b^2$ denotes the true observation noise variance. Reconstructing $\xz$ is ill-posed and requires a prior, through $p(\xz \mid \yy) \propto p(\yy \mid \xz)\, p(\xz)$. Classical priors (total variation, sparsity) are not very expressive. We instead learn $p(\xz)$ with a generative model. Diffusion models dominate this approach and serve here as the prior.\\
\subsection{Diffusion models}
A diffusion model transforms a distribution that is easy to sample from, a standard Gaussian, into the complex distribution of natural images~\cite{song2021sde}. In continuous time, a forward stochastic differential equation (SDE) gradually adds noise to an image $\xz$ from $t=0$ to $t=T$:
\begin{equation}
\mathrm{d}\xt = f(\xt, t)\,\mathrm{d}t + g(t)\,\mathrm{d}w,
\label{eq:fwd_sde}
\end{equation}
where $f$ is the drift, $g$ the diffusion coefficient and $w$ a standard Wiener process. The distribution $p_T$ is a standard Gaussian.\\
DDPM \cite{ho2020ddpm} is the variance preserving (VP) discretization of \eqref{eq:fwd_sde}: the forward process becomes a Markov chain of Gaussian steps. After marginalisation we can express $\xt$ with respect to $\xz$ : 
\begin{equation}
\xt = \sqrt{\bar\alpha_t}\, \xz + \sqrt{1 - \bar\alpha_t}\, \boldsymbol{\varepsilon}, \quad \boldsymbol{\varepsilon} \sim \mathcal{N}(0, I),
\label{eq:fwd}
\end{equation}
with $\bar\alpha_t$ decreasing from $1$ to $0$.\\
A network $\varepsilon_\theta(\xt, t)$ predicts the noise added at each diffusion step and gives the score \cite{ho2020ddpm, song2021sde}:
\begin{equation}
\nabla_{\xt} \log p_t(\xt) = -\varepsilon_\theta(\xt, t) / \sqrt{1 - \bar\alpha_t}.
\label{eq:score}
\end{equation}
Tweedie's formula provides the posterior mean
\begin{equation}
\xhz(\xt) = \mathbb{E}_{\xz\mid \xt}[\xz] = \frac{\xt - \sqrt{1 - \bar\alpha_t}\, \varepsilon_\theta(\xt, t)}{\sqrt{\bar\alpha_t}}.
\label{eq:tweedie}
\end{equation}
Generation amounts to traversing time backward. Anderson's reverse SDE \cite{anderson1982reverse} brings out the score as the only data-dependent term:
\begin{equation}
\mathrm{d}\xt = \big[f(\xt, t) - g(t)^2\,\nabla_{\xt}\log p_t(\xt)\big]\,\mathrm{d}t + g(t)\,\mathrm{d}\bar w,
\label{eq:rev_sde}
\end{equation}
where $\bar w$ is a standard Wiener process in reversed time. Its VP discretization, starting from $x_T \sim \mathcal{N}(0, I)$, gives the DDPM reverse step. We first write it using the score:
\begin{equation}
\mathbf{x_{t-1}} = \frac{1}{\sqrt{\alpha_t}}\Big(\xt + \beta_t\,\nabla_{\xt}\log p_t(\xt)\Big) + \sqrt{\tilde\beta_t}\, \boldsymbol{z},
\quad \boldsymbol{z} \sim \mathcal{N}(0, \boldsymbol{I}),
\label{eq:ddpm_score}
\end{equation}
then, by injecting \eqref{eq:score}:
\begin{equation}
\mathbf{x_{t-1}} = \frac{\sqrt{\bar\alpha_{t-1}}\,\beta_t}{1 - \bar\alpha_t}\, \xhz + \frac{\sqrt{\alpha_t}\,(1 - \bar\alpha_{t-1})}{1 - \bar\alpha_t}\, \xt + \sqrt{\tilde\beta_t}\, \boldsymbol{z},
\label{eq:ddpm}
\end{equation}
with $\alpha_t = \bar\alpha_t / \bar\alpha_{t-1}$, $\beta_t = 1 - \alpha_t$, $\tilde\beta_t = \tfrac{1 - \bar\alpha_{t-1}}{1 - \bar\alpha_t}\beta_t$, $\boldsymbol{z} \sim \mathcal{N}(0, \boldsymbol{I})$.
\subsection{Conditional diffusion models}
Generating according to $p(\xz \mid \yy)$ amounts to replacing the score \eqref{eq:score} with the conditional score in the reverse step \eqref{eq:ddpm_score}:
\begin{equation}
\mathbf{x_{t-1}} = \frac{1}{\sqrt{\alpha_t}}\Big(\xt + \beta_t\,\nabla_{\xt}\log p_t(\xt\mid \yy)\Big) + \sqrt{\tilde\beta_t}\, \boldsymbol{z},
\label{eq:ddpm_score_conditionnel}
\end{equation}
with $\boldsymbol{z} \sim \mathcal{N}(0, \boldsymbol{I})$.
Bayes' rule decomposes this score into two terms:
\begin{equation}
\nabla_{\xt} \log p_t(\xt \mid \yy) = \underbrace{\nabla_{\xt} \log p_t(\xt)}_{\text{prior score}} + \underbrace{\nabla_{\xt} \log p_t(\yy \mid \xt)}_{\text{measurement term}}.
\label{eq:bayes_score}
\end{equation}
The first term, the prior score, is provided by the network through~\eqref{eq:score}. The second, the measurement term, has no exact
expression: it involves
\begin{equation}
p_t(\yy \mid \xt) = \int p(\yy \mid \xz)\, p_t(\xz \mid \xt)\, dx_0,
\label{eq:meas}
\end{equation}
intractable because the distribution $p_t(\xz \mid \xt)$ is unknown. Methods differ in their approximation of $p_t(\xz \mid \xt)$, hence of this measurement term.
\subsubsection{DPS}
DPS \cite{chung2023dps} takes $p_t(\xz \mid \xt) \approx \delta(\xz - \xhz(\xt))$, hence $p_t(\yy \mid \xt) \approx p(\yy \mid \xhz(\xt))$ and
\begin{equation}
\nabla_{\xt} \log p_t(\yy \mid \xt,\zeta') \approx -\zeta'\, \nabla_{\xt} \|\yy - \Ab\xhz(\xt)\|^2.
\label{eq:dps}
\end{equation}
The scale factor $\zeta'$, which absorbs the noise level, is tuned per task and per noise level.\\
\subsubsection{$\Pi$GDM}
$\Pi$GDM \cite{song2023pigdm} relies on two Gaussian approximations, each carrying a variance that is a hyperparameter.
\paragraph*{The denoising approximation} The inverse distribution $p_t(\xz \mid \xt)$ is replaced by an isotropic Gaussian centered on the Tweedie estimate \eqref{eq:tweedie}:
\begin{equation}
p_t(\xz \mid \xt,\, r_t^2) \approx \Norm\!\big(\xhz(\xt),\, r_t^2\,\Iden\big).
\label{eq:pigdm_approx}
\end{equation}
The variance $r_t^2$ measures the assumed dispersion of $\xz$ around $\xhz$.
\paragraph*{Observation model}
In the linear Gaussian inverse-problem setting of \eqref{eq:inverse_pb}, the true observation likelihood is Gaussian, with noise variance $\sigma_b^2$. In $\Pi$GDM, the same Gaussian likelihood is used for guidance, with $\sigma_b^2$ assumed to be known. In practice, it is chosen by the user or tuned on a ground-truth criterion. With \eqref{eq:pigdm_approx}, the Gaussian marginalization of \eqref{eq:meas} is closed and brings out the two hyperparameters:
\begin{equation}
p_t(\yy \mid \xt,\, \sigma_b^2, r_t^2)
\approx
\Norm\!\big(\Ab\xhz,\, \sigma_b^2\,\Iden
+ r_t^2\,\Ab\Ab^\top\big).
\label{eq:pigdm_law}
\end{equation}
The measurement term follows in pseudo-inverse form:
\begin{equation} 
\begin{split}
&\nabla_{\xt}\log p_t(\yy \mid \xt,\, \sigma_b^2, r_t^2) \approx \\ &\Big( (\yy - \Ab\xhz)^\top (r_t^2 \Ab\Ab^\top + \sigma_b^2\Iden)^{-1} \Ab\, \tfrac{\partial \xhz}{\partial \xt} \Big)^\top,
\label{eq:pigdm_grad}
\end{split}
\end{equation}
evaluated as a Jacobian-vector product by backpropagation through $\xhz$. The pseudo-inverse term requires solving the system $(r_t^2 \Ab\Ab^\top + \sigma_b^2\Iden)\,u = \yy - \Ab\xhz$, by conjugate gradient for example.\\
\paragraph*{Guidance weighting and update}
The prior score $\nabla_{\xt}\log p_t(\xt)$ is given by the network through \eqref{eq:score}. The measurement term \eqref{eq:pigdm_grad} is not added as is. Following classifier guidance \cite{dhariwal2021diffusion}, $\Pi$GDM weights it by three factors: the adaptive weight $r_t^2$ (Section~3.3 of \cite{song2023pigdm}), the factor $\sqrt{\abart}$ specific to the VP parameterization, and a scalar scale $w$ ($w=1$ nominal, tuned per task, up to the grid search of \cite{pandey2024fast}). This weighted correction is added in sample space to an unconditional step, a DDIM step in the original formulation (eq.~10 of \cite{song2023pigdm}). We work in the DDPM sampler, used throughout the article, whose unconditional step is \eqref{eq:ddpm_score}. The conditional update then keeps the form of \eqref{eq:ddpm_score_conditionnel}, with the conditional score identified as:
\begin{align}
\nabla_{\xt}\log p_t(\xt \mid \yy) =& \frac{w\sqrt{\alpha_t\abart}\,r_t^2}{\beta_t}\,\nabla_{\xt}\log p_t(\yy \mid \xt,\, \sigma_b^2, r_t^2)\notag\\
&+ \nabla_{\xt}\log p_t(\xt).
\label{eq:pigdm_score}
\end{align}
With $\Pi$GDM's choice $r_t^2 = 1-\abart$, this weighting coefficient equals $w\,\sqrt{\alpha_t\abart}\,(1-\abart)/\beta_t$.\\
Algorithm~\ref{alg:pigdm} details one such reverse step.
\begin{algorithm}[h]
\caption{$\Pi$GDM \cite{song2023pigdm}, one reverse step in the DDPM sampler}
\label{alg:pigdm}
\begin{algorithmic}[1]
\REQUIRE $\xt$, $\yy$, $\Ab$, network $\eps_\theta$, $\bar\alpha_t$; \textbf{hyperparameters} $\sigma_b^2,\ w$
\STATE \textbf{$\triangleright$ Score computation}
\STATE $r_t^2 \gets 1 - \bar\alpha_t$
\STATE $\eps_\theta \gets \eps_\theta(\xt, t)$ \COMMENT{network}
\STATE $\xhz \gets (\xt - \sqrt{1-\abart}\,\eps_\theta)/\sqrt{\abart}$ \COMMENT{Tweedie, \eqref{eq:tweedie}}
\STATE $\text{score}({\xt}) \gets -\,\eps_\theta/\sqrt{1-\abart}$ \COMMENT{\eqref{eq:score}}
\STATE $\text{score}({\yy\mid \xt}) \gets \nabla_{\xt}\log p_t(\yy \mid \xt,\, \sigma_b^2, r_t^2)$ \COMMENT{\eqref{eq:pigdm_grad} via \eqref{eq:pigdm_law}}
\STATE $\text{score}({\xt\mid \yy})  \gets \text{score}({\xt})  + \frac{w\,\sqrt{\alpha_t\abart}\,r_t^2}{\beta_t}\, \text{score}({\yy\mid \xt}) $ 
\STATE \textbf{$\triangleright$ DDPM reverse step}
\STATE $\bm z \sim \Norm(0, \Iden)$
\STATE $\bm x_{t-1} \gets \tfrac{1}{\sqrt{\alpha_t}}\Bigl(\xt + \beta_t\, \text{score}(\xt\mid \yy) \Bigr)+ \sqrt{\tilde\beta_t}\, \boldsymbol{z}$ 
\RETURN $\bm x_{t-1}$
\end{algorithmic}
\vspace{-3pt}
\end{algorithm}
These three scalars $(r_t^2, \sigma_b^2, w)$ control the $\Pi$GDM guidance. $\Pi$GDM sets $r_t^2=1-\bar\alpha_t$, a data-agnostic heuristic (Appendix A.3 of \cite{song2023pigdm}), while $\sigma_b^2$ and $w$ are chosen per task.
\section{Method}
\label{sec:methode}
The goal is to express $\nabla_{\xt}\log p_t(\xt\mid \yy)$ in closed form and to estimate its uncertainty parameters by variational Bayesian inference. At each reverse step, inference is carried out at fixed $\xt$: $\xt$ acts only as a conditioning variable and is never inferred. The inferred precisions adapt the balance between the measurements and the Tweedie prediction within the posterior mean, while the deterministic VP-dependent scale of the score correction is retained.
\subsection{Closed form conditional score}
\label{sec:methode:general}
We first express the conditional score from the Gaussian approximation of the denoising. We denote by $\xhz(\xt)$ the Tweedie estimate \eqref{eq:tweedie}, and we reuse the Gaussian approximation \eqref{eq:pigdm_approx} of $\Pi$GDM \cite{song2023pigdm}, denoted (H1):
\begin{equation}
\bigl( \xz \mid \xt,\, r_t^2\bigr) \sim \Norm\!\left(\xhz(\xt),\, r_t^2\, \Iden\right).
\tag{H1}\label{eq:H1}
\end{equation}
The Markov chain $\xt \to \xz \to \yy$ in Figure~\ref{fig:graphical_model} entails the conditional independence $\yy \perp\!\!\!\perp \xt \mid \xz$.

\begin{proposition}[General conditional score]
\label{prop:score_general}
Under assumption \eqref{eq:H1} and the conditional independence $\yy \perp\!\!\!\perp \xt \mid \xz$, the conditional score reads
\begin{equation}
\begin{aligned}
\nabla_{\xt}\log p_t(\xt \mid \yy) =  &\frac{1}{r_t^2}\Big(\frac{\partial \xhz}{\partial \xt}\Big)^{\!\top}\!\big(\Exp_{\xz \mid \xt, \yy}[\xz] - \xhz(\xt)\big)\\ &+\nabla_{\xt}\log p_t(\xt).
\end{aligned}
\label{eq:score_general}
\end{equation}
\end{proposition}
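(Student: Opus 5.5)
We split the conditional score with Bayes' rule \eqref{eq:bayes_score}. The prior score $\nabla_{\xt}\log p_t(\xt)$ is already the second term of \eqref{eq:score_general}, so it suffices to show
\[
\nabla_{\xt}\log p_t(\yy\mid\xt) = \frac{1}{r_t^2}\Big(\frac{\partial \xhz}{\partial \xt}\Big)^{\!\top}\big(\Exp_{\xz\mid\xt,\yy}[\xz]-\xhz(\xt)\big).
\]
The conditional independence $\yy \perp\!\!\!\perp \xt \mid \xz$ justifies the marginalization \eqref{eq:meas}, namely $p_t(\yy\mid\xt)=\int p(\yy\mid\xz)\,p_t(\xz\mid\xt)\,d\xz$. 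Crucially, $\xt$ enters only through the factor $p_t(\xz\mid\xt)$, and under \eqref{eq:H1} this factor depends on $\xt$ only through its mean $\xhz(\xt)$. We take $r_t^2$ as a fixed parameter that does not depend on $\xt$.

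Next we differentiate under the integral and use the log-derivative trick:
\[
\nabla_{\xt} p_t(\yy\mid\xt)=\int p(\yy\mid\xz)\,p_t(\xz\mid\xt)\,\nabla_{\xt}\log p_t(\xz\mid\xt)\,d\xz .
\]
The Gaussian form \eqref{eq:H1} and the chain rule through $\xhz$ give
\[
\nabla_{\xt}\log p_t(\xz\mid\xt)=\frac{1}{r_t^2}\Big(\frac{\partial\xhz}{\partial\xt}\Big)^{\!\top}\big(\xz-\xhz(\xt)\big).
\]
The Jacobian factor and $1/r_t^2$ do not depend on $\xz$, so they come out of the integral. Dividing by $p_t(\yy\mid\xt)$ leaves $\int \big(\xz-\xhz\big)\,p(\yy\mid\xz)\,p_t(\xz\mid\xt)/p_t(\yy\mid\xt)\,d\xz$. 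By Bayes' rule, again using the conditional independence, the weight $p(\yy\mid\xz)\,p_t(\xz\mid\xt)/p_t(\yy\mid\xt)$ is exactly $p(\xz\mid\xt,\yy)$. The integral is therefore $\Exp_{\xz\mid\xt,\yy}[\xz]-\xhz(\xt)$, which gives the claimed identity. Note that we never need the likelihood $p(\yy\mid\xz)$ to be Gaussian; only (H1) and the Markov structure are used.

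The main obstacle is justifying the interchange of gradient and integral. We would handle it with a dominated-convergence argument: the Gaussian density and its $\xt$-derivative are bounded by an integrable envelope on a neighbourhood of $\xt$, provided $\xhz$ is $C^1$ (true for a smooth network) and $p(\yy\mid\xz)$ is bounded (true for a Gaussian likelihood). A secondary point to state explicitly is that $r_t^2$ is treated as conditioning and not as a function of $\xt$. If it depended on $\xt$, extra terms would appear. In the variational scheme this is consistent, because the inference is carried out at fixed $\xt$.
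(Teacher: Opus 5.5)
Your proof is correct and follows essentially the same route as the paper's (Appendix~\ref{app:closed_form_score1}). The paper differentiates the joint $p(\xt,\yy)=p(\xt)\,p(\yy\mid\xt)$ by the product rule and divides at the end, whereas you split the log first; the core steps (differentiating under the integral, the Gaussian score of \eqref{eq:H1} via the chain rule through $\xhz$, and Bayes' rule to turn the weighted integral into the posterior mean) are identical, and your dominated-convergence remark and your note that $r_t^2$ does not depend on $\xt$ make explicit two assumptions the paper uses silently.
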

\begin{proof}
See Appendix~\ref{app:closed_form_score1}.
\end{proof}
The first term is the measurement term: the gap $\Exp_{\xz \mid \xt, \yy}[\xz] - \xhz$ between the posterior mean and the denoiser estimate, transported into the space of $\xt$ by $\partial\xhz/\partial\xt$. The problem reduces to computing the posterior mean $\Exp_{\xz \mid \xt, \yy}[\xz]$.
The second is the prior score \eqref{eq:score}, provided by the network.
\begin{corollary}[Gaussian closed form]
\label{cor:closed_form}
For a Gaussian observation likelihood with working variance $\sigma_b^2$, and under the denoising approximation \eqref{eq:H1}, the posterior distribution $p_t(\xz \mid \xt, \yy)$ is Gaussian,
\begin{equation}
(\xz \mid \xt, \yy) \sim \Norm(\bm\mu_{\text{post}},\, \bm\Sigma_{\text{post}}),
\label{eq:gauss_post}
\end{equation}
\begin{align}
\bm\Sigma_{\text{post}} &= \left(\frac{\Ab^\top \Ab}{\sigma_b^2} + \frac{\Iden}{r_t^2}\right)^{\!-1},
\label{eq:sigma_post}\\
\bm\mu_{\text{post}} &= \bm\Sigma_{\text{post}}\!\left(\frac{\Ab^\top \yy}{\sigma_b^2} + \frac{\xhz(\xt)}{r_t^2}\right),
\label{eq:mu_post}
\end{align}
and the conditional score \eqref{eq:score_general} admits the closed form
\begin{equation}
\begin{aligned}
\nabla_{\xt}\log p_t(\xt \mid \yy) = &\frac{1}{r_t^2}\Big(\frac{\partial \xhz}{\partial \xt}\Big)^{\!\top}\!\big(\bm\mu_{\text{post}} - \xhz(\xt)\big) \\&+\nabla_{\xt}\log p_t(\xt) 
\end{aligned}
\label{eq:score_closed}
\end{equation}
\end{corollary}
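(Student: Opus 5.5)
The corollary follows from Proposition~\ref{prop:score_general} once the posterior $p_t(\xz \mid \xt, \yy)$ is identified. I will compute this posterior explicitly as a product of two Gaussians in $\xz$ and read off its mean.

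\textbf{Step 1: factorize the posterior.} By Bayes' rule and the conditional independence $\yy \perp\!\!\!\perp \xt \mid \xz$,
\[
p_t(\xz \mid \xt, \yy) \propto p(\yy \mid \xz)\, p_t(\xz \mid \xt, r_t^2).
\]
The first factor is the Gaussian likelihood $\Norm(\Ab\xz, \sigma_b^2\Iden)$ from \eqref{eq:inverse_pb}, with the working variance $\sigma_b^2$. The second factor is given by \eqref{eq:H1}. At this stage $\xt$ is fixed, so $\xhz(\xt)$ enters only as a constant vector.

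\textbf{Step 2: complete the square.} The log-posterior, up to additive constants independent of $\xz$, is
\[
-\tfrac{1}{2\sigma_b^2}\|\yy-\Ab\xz\|^2-\tfrac{1}{2r_t^2}\|\xz-\xhz\|^2 .
\]
This is a quadratic form in $\xz$. Expanding it, the quadratic coefficient is $\tfrac12\xz^\top(\Ab^\top\Ab/\sigma_b^2+\Iden/r_t^2)\xz$, which gives the precision matrix and hence \eqref{eq:sigma_post}. The linear coefficient is $\xz^\top(\Ab^\top\yy/\sigma_b^2+\xhz/r_t^2)$, which gives \eqref{eq:mu_post}.

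The precision matrix is invertible because it is a positive semidefinite matrix $\Ab^\top\Ab/\sigma_b^2$ plus a positive multiple of the identity. It is therefore symmetric positive definite, and the normalized density is the Gaussian \eqref{eq:gauss_post}. This holds for any $\Ab$, including rank-deficient ones, which is the only point needing care.

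\textbf{Step 3: substitute into the score.} A Gaussian's mean is $\bm\mu_{\text{post}}$, so $\Exp_{\xz\mid\xt,\yy}[\xz]=\bm\mu_{\text{post}}$. Substituting this into \eqref{eq:score_general} of Proposition~\ref{prop:score_general} yields \eqref{eq:score_closed} directly.

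The main obstacle is bookkeeping rather than conceptual. The Jacobian $\partial\xhz/\partial\xt$ must not be differentiated through $\bm\mu_{\text{post}}$ at this stage. This is consistent, because Proposition~\ref{prop:score_general} already performed the differentiation and only requires the value of the posterior mean, which is computed here at fixed $\xt$.
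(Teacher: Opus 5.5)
Your proposal is correct and follows the paper's route. The appendix likewise identifies $p_t(\xz\mid\xt,\yy)$ as the Gaussian obtained from the working likelihood and \eqref{eq:H1}, sets $\Exp_{\xz\mid\xt,\yy}[\xz]=\bm\mu_{\text{post}}$, and substitutes this into Proposition~\ref{prop:score_general}. The paper only asserts the Gaussian conjugacy; your completion of the square and the positive-definiteness of $\Ab^\top\Ab/\sigma_b^2+\Iden/r_t^2$ (valid for rank-deficient $\Ab$) fill in what it leaves implicit, though the quadratic term enters the log-posterior with a minus sign, not the plus you wrote.
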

\begin{proof}
See Appendix~\ref{app:closed_form_score2}.
\end{proof}
\noindent $\bm\mu_{\text{post}}$ is a weighted trade-off between $\Ab^\top \yy / \sigma_b^2$ (consistency with the measurements) and $\xhz(\xt) / r_t^2$ (prediction of the denoiser). When $\sigma_b \to 0$ and $\Ab^\top \Ab$ invertible, $\bm\mu_{\text{post}} \to (\Ab^\top \Ab)^{-1}\Ab^\top \yy$, the least-squares solution. When $r_t \to 0$, $\bm\mu_{\text{post}} \to \xhz$.
In FB-GDM, the inferred precisions determine $\bm\mu_{\text{post}}$ and therefore the balance between the Tweedie prediction and the measurements. Only in \eqref{eq:score_closed}, the outer factor $r_t^{-2}$ is replaced by $(1-\bar\alpha_t)^{-1}$, preserving the VP time scaling without introducing an additional guidance weight.\\
Substituting \eqref{eq:score_closed} into the DDPM reverse update \eqref{eq:ddpm_score_conditionnel} yields a step identical to the unconditional one, with the prior score corrected by the measurement term. The structure recalls $\Pi$GDM's guidance \eqref{eq:pigdm_score}, but without the scalar scale $w$: the inferred precisions determine the posterior mean, while the outer VP-dependent factor remains fixed. This connection is made precise in Appendix~\ref{app:link}.
\subsection{Hierarchical Bayesian model}
\label{sec:methode:hierarchique}
Expressions \eqref{eq:sigma_post}--\eqref{eq:mu_post} depend on the working variances $r_t^2$ and $\sigma_b^2$. In FB-GDM, neither of them is fixed from the true noise level nor tuned from the ground truth. Instead, we treat the corresponding precisions as unknown random variables and infer them at each reverse diffusion step to determine the posterior mean.\\
The Gamma family is conjugate to the Gaussians for the precision, not for the variance. We therefore parameterize:
\begin{equation}
\gamma_r \coloneqq 1/r_t^2, \qquad \gamma_b \coloneqq 1/\sigma_b^2.
\label{eq:precisions}
\end{equation}
The variables to infer form the vector $\mathbf{v} = (\xz, \gamma_r, \gamma_b)$; $\xt$ remains a conditioning, which enters only through $\xhz(\xt)$.
\begin{figure}[h]
\centering
\begin{tikzpicture}[
  >=Stealth,
  lat/.style={circle, draw=red!70!black, fill=red!12, line width=0.6pt, minimum size=8mm, inner sep=1pt},
  hyp/.style={circle, draw=red!70!black, fill=red!12, line width=0.6pt, minimum size=6.5mm, inner sep=0pt},
  obs/.style={circle, draw=blue!70!black, fill=blue!18, line width=0.6pt, minimum size=8mm, inner sep=1pt},
  fix/.style={rectangle, rounded corners=1.5pt, draw=black!55, fill=black!8, line width=0.6pt, minimum size=7.5mm, inner sep=2pt},
  lbl/.style={font=\scriptsize, inner sep=1pt},
  grp/.style={draw, dashed, rounded corners, line width=0.6pt}
]
\node[lat] (x0)   at (1.8,1.5)  {$\xz$};
\node[obs] (y)    at (0,0)      {$\yy$};
\node[fix] (xt)   at (3.6,0)    {$\xt$};
\node[hyp] (taur) at (1.8,2.6)  {$\gamma_r$};
\node[hyp] (taub) at (-1.2,0)   {$\gamma_b$};
% precisions: straight arrows
\draw[->] (taur) -- (x0);
\draw[->] (taub) -- (y);
% generative factors: arcs to the outside
\draw[->] (xt) to[bend right=20] node[lbl, sloped, above] {$p(\xz\mid\xt,\gamma_r)$} (x0);
\draw[->] (x0) to[bend right=20] node[lbl, sloped, above] {$p(\yy\mid\xz,\gamma_b)$} (y);
% intractable posterior
\draw[->, blue!70!black, densely dotted, line width=0.7pt]
  (y) to[bend right=28] node[lbl, sloped, below, text=blue!70!black] {$p(\xz\mid\xt,\yy)$} (x0);

\draw[grp] (-0.7,-0.5) rectangle (4.1,2.15);

\end{tikzpicture}
\caption{Graphical model at fixed $\xt$. Red circles: inferred variables ($\xz,\gamma_r,\gamma_b$); blue circle: observation $\yy$; square: fixed conditioning $\xt$. Solid line: tractable generative factors. Blue dashed line: posterior $p(\xz\mid\xt,\yy)$, intractable, the target of inference.}
\label{fig:graphical_model}
\end{figure}
At fixed $\xt$, the independences of the model (Figure~\ref{fig:graphical_model}) factorize the joint distribution, from general to particular:
\begin{equation}
p_t(\mathbf{v}, \yy \mid \xt) = p(\yy \mid \xz, \gamma_b)\, p(\xz \mid \gamma_r, \xt)\, p(\gamma_r)\, p(\gamma_b),
\label{eq:joint}
\end{equation}
whose factors are:
\begin{align}
\xz \mid \gamma_r, \xt &\sim \Norm\!\left(\xhz(\xt),\, \gamma_r^{-1}\, \Iden\right),
\tag{M1}\label{eq:M1}\\
\yy \mid \xz, \gamma_b &\sim \Norm\!\left(\Ab\xz,\, \gamma_b^{-1}\, \Iden\right),
\tag{M2}\label{eq:M2}\\
 p(\gamma_r) &\propto \gamma_r^{-1},
\tag{M3}\label{eq:M3}\\
 p(\gamma_b) &\propto \gamma_b^{-1}.
\tag{M4}\label{eq:M4}
\end{align}
The factors \eqref{eq:M1} and \eqref{eq:M2} correspond respectively to the denoising approximation \eqref{eq:H1} and to a Gaussian working likelihood
built from the forward model in \eqref{eq:inverse_pb}; their precisions are now treated as random variables. The priors \eqref{eq:M3} and \eqref{eq:M4} are the Jeffreys priors of the scale parameters \cite{jeffreys1946}: non-informative, improper, and with no parameter to tune. 
Expanding each factor of \eqref{eq:joint} and keeping only the terms which depend on $\mathbf{v}$:
\begin{align}
&\log p_t(\mathbf{v}, \yy \mid \xt)
= \left(\tfrac{m}{2} - 1\right)\log\gamma_b\notag\\
&- \tfrac12\|\yy - \Ab\xz\|^2\,\gamma_b
+ \left(\tfrac{n}{2} - 1\right)\log\gamma_r\notag\\
&- \tfrac12\|\xz - \xhz(\xt)\|^2\,\gamma_r + \text{const.}
\label{eq:logjoint}
\end{align}
The structure, quadratic in $\xz$ and linear in $\gamma_b, \log\gamma_b, \gamma_r, \log\gamma_r$, guarantees Gaussian and Gamma conjugacy.
\subsection{Variational Bayesian inference}
\label{sec:methode:vb}
The posterior distribution $p_t(\mathbf{v} \mid \yy, \xt)$ involves the normalization constant $p_t(\yy \mid \xt)$, an intractable integral over $\mathbf{v}$. We approximate it by a density $q(\mathbf{v})$ that maximizes the free energy
\begin{equation}
\Fq(q) = \int q(\mathbf{v}) \log\frac{p_t(\mathbf{v}, \yy \mid \xt)}{q(\mathbf{v})}\, d\mathbf{v},
\label{eq:elbo}
\end{equation}
a lower bound of $\log p_t(\yy \mid \xt)$. This is equivalent to minimizing $\KL[q \,\|\, p_t(\cdot \mid \yy, \xt)]$ \cite{quinn2006variational, beal2003variational}.\\
We factorize the approximating distribution $q$ into three blocks:
\begin{equation}
q(\mathbf{v}) = q(\xz)\, q(\gamma_r)\, q(\gamma_b),
\label{eq:MF}
\end{equation}
with $q(\xz) = \Norm(\bm\mu, \bm\Sigma)$, $q(\gamma_r) = \Gam(\tilde a_r, \tilde b_r)$ and $q(\gamma_b) = \Gam(\tilde a_b, \tilde b_b)$, where $\Gam(x;a,b) = x^{a-1}\frac{b^ae^{-bx}}{\Gamma(a)}$.
\begin{proposition}[Conjugate variational inference]
\label{prop:vb}
Under \eqref{eq:M1}--\eqref{eq:M4} and the factorization \eqref{eq:MF}, the fixed point of the updates is
\begin{align}
\bm\Sigma &= \left(\langle\gamma_b\rangle\, \Ab^\top \Ab + \langle\gamma_r\rangle\, \Iden\right)^{-1},
\label{eq:Sigma_full}\\
\bm\mu &= \bm\Sigma\Bigl(\langle\gamma_b\rangle\, \Ab^\top \yy + \langle\gamma_r\rangle\, \xhz(\xt)\Bigr),
\label{eq:mu_full}
\end{align}
that is \eqref{eq:sigma_post}--\eqref{eq:mu_post} where the unknown precisions are replaced by their posterior expectations.
\end{proposition}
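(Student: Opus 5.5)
The approach is the standard mean-field coordinate-ascent characterization. For the factorization \eqref{eq:MF}, the free energy \eqref{eq:elbo} is maximized with respect to one block, the others held fixed, by
\[
\log q(\xz) = \Exp_{q(\gamma_r)q(\gamma_b)}\bigl[\log p_t(\mathbf{v},\yy\mid\xt)\bigr] + \text{const},
\]
and analogously for $q(\gamma_r)$ and $q(\gamma_b)$. This follows by writing $\Fq$ as a negative KL divergence between $q(\xz)$ and the normalized exponential of the expected log-joint, up to terms not depending on $q(\xz)$ \cite{quinn2006variational, beal2003variational}. A fixed point of the updates is therefore a triple in which each factor equals its own optimal update given the other two. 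So it suffices to compute the $\xz$-update from the log-joint \eqref{eq:logjoint}.

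The main step is to take the expectation of \eqref{eq:logjoint} under $q(\gamma_r)q(\gamma_b)$, keeping only the $\xz$-dependent terms. Since \eqref{eq:logjoint} is linear in $\gamma_b$ and $\gamma_r$, the expectation simply replaces them by $\langle\gamma_b\rangle = \tilde a_b/\tilde b_b$ and $\langle\gamma_r\rangle = \tilde a_r/\tilde b_r$:
\[
\log q(\xz) = -\tfrac12\langle\gamma_b\rangle\|\yy-\Ab\xz\|^2 - \tfrac12\langle\gamma_r\rangle\|\xz-\xhz(\xt)\|^2 + \text{const}.
\]
Expanding the squares gives the quadratic form
\[
-\tfrac12\,\xz^\top\bigl(\langle\gamma_b\rangle\Ab^\top\Ab + \langle\gamma_r\rangle\Iden\bigr)\xz + \xz^\top\bigl(\langle\gamma_b\rangle\Ab^\top\yy + \langle\gamma_r\rangle\xhz\bigr) + \text{const}.
\]
Completing the square identifies $q(\xz)$ as Gaussian, with the precision matrix and mean of \eqref{eq:Sigma_full}--\eqref{eq:mu_full}. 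The precision matrix is invertible because $\langle\gamma_r\rangle>0$ makes it positive definite. This is exactly the computation of Corollary~\ref{cor:closed_form}, with $1/\sigma_b^2$ and $1/r_t^2$ replaced by $\langle\gamma_b\rangle$ and $\langle\gamma_r\rangle$, which justifies the closing remark of the statement.

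To make the fixed point well defined, I would also derive the Gamma updates from \eqref{eq:logjoint} and the priors \eqref{eq:M3}--\eqref{eq:M4}:
\[
\tilde a_b = m/2, \quad \tilde b_b = \tfrac12\Exp_{q(\xz)}\|\yy-\Ab\xz\|^2 = \tfrac12\bigl(\|\yy-\Ab\bm\mu\|^2 + \mathrm{tr}(\Ab^\top\Ab\bm\Sigma)\bigr),
\]
\[
\tilde a_r = n/2, \quad \tilde b_r = \tfrac12\bigl(\|\bm\mu-\xhz\|^2 + \mathrm{tr}\,\bm\Sigma\bigr).
\]
These confirm that $q(\gamma_r)$ and $q(\gamma_b)$ stay in the Gamma family, and that their means $n/(2\tilde b_r)$ and $m/(2\tilde b_b)$ are finite and positive whenever the residuals are nonzero. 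The improper Jeffreys priors contribute only the $-\log\gamma$ terms already absorbed into the exponents in \eqref{eq:logjoint}, so no extra hyperparameters appear.

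The main obstacle is rigor around existence and propriety rather than the algebra. The Jeffreys priors are improper, so I would need to argue that each update yields a proper density, i.e.\ that $\tilde b_b, \tilde b_r > 0$. This holds since $\mathrm{tr}\,\bm\Sigma>0$, and $\mathrm{tr}(\Ab^\top\Ab\bm\Sigma)>0$ whenever $\Ab\neq 0$. The proposition only characterizes the form of the fixed point, not convergence to it. I would state this explicitly and note that coordinate ascent monotonically increases $\Fq$, which gives the iterative scheme its justification.
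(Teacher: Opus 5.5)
Your proposal is correct and follows the same route as the paper. Both apply the coordinate-ascent update \eqref{eq:qref} to the log-joint \eqref{eq:logjoint}, whose expectation under $q(\gamma_r)\,q(\gamma_b)$ is quadratic in $\xz$ and gives \eqref{eq:Sigma_full}--\eqref{eq:mu_full}, alongside the conjugate Gamma updates \eqref{eq:atilde_b}--\eqref{eq:btilde_r}. Your added checks (positive definiteness of the precision matrix, and $\tilde b_b,\tilde b_r>0$ so the Gamma factors are proper despite the improper Jeffreys priors) are details the paper leaves implicit.
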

Coordinate Ascent Variational Inference \cite{beal2003variational} maximizes \eqref{eq:elbo} by updating each block in turn, the others held fixed:
\begin{equation}
q_j^r(v_j) \propto \exp\!\left[\bigl\langle \log p_t(\mathbf{v}, \yy \mid \xt)\bigr\rangle_{\prod_{l \ne j} q_l}\right].
\label{eq:qref}
\end{equation}
The structure of \eqref{eq:logjoint} guarantees by conjugacy that each block stays in its family. The expectation of \eqref{eq:logjoint} under $q(\gamma_r)\, q(\gamma_b)$ is quadratic in $\xz$, hence \eqref{eq:Sigma_full}--\eqref{eq:mu_full}. The expectation under $q(\xz)$, with $\langle\|\yy - \Ab\xz\|^2\rangle = \|\yy - \Ab\bm\mu\|_2^2 + \mathrm{tr}(\Ab^\top \Ab\, \bm\Sigma)$, gives
\begin{align}
\tilde a_b &= \tfrac{m}{2},
\label{eq:atilde_b}\\
\tilde b_b &=  \tfrac12\!\left(\|\yy - \Ab\bm\mu\|^2_2 + \mathrm{tr}(\Ab^\top \Ab\, \bm\Sigma)\right),
\label{eq:btilde_b}
\end{align}
and similarly
\begin{align}
\tilde a_r &= \tfrac{n}{2},
\label{eq:atilde_r}\\
\tilde b_r &= \tfrac12\!\left(\|\bm\mu - \xhz\|^2_2 + \mathrm{tr}(\bm\Sigma)\right).
\label{eq:btilde_r}
\end{align}
For $X \sim \Gam(a, b)$, $\Exp[X] = a/b$, hence
\begin{equation}
\mathbb{E}_q[\gamma_b]=\langle \gamma_b \rangle = \frac{\tilde a_b}{\tilde b_b}, \qquad \mathbb{E}_q[\gamma_r]=\langle \gamma_r \rangle = \frac{\tilde a_r}{\tilde b_r}.
\label{eq:exp_gamma}
\end{equation}
Alternating these three updates increases $\Fq$ at each step \cite{beal2003variational}.
\subsection{Separable factorization and fast updates}
\label{sec:methode:separable}
Update \eqref{eq:Sigma_full} requires inverting an $n \times n$ matrix, with $n$ on the order of $2\times10^5$ for a $256\times256$ color image, at each iteration of each reverse step. This cost is prohibitive. We therefore factorize $q(\xz)$ into independent components, as in variational super-resolution \cite{babacan2011sr}:
\begin{equation}
q(\mathbf{v}) = \left(\prod_{i=1}^{n} q_i(x_{0,i})\right) q(\gamma_r)\, q(\gamma_b),
\quad q_i(x_{0,i}) = \Norm(\mu_i, \sigma_i^2).
\label{eq:MFsep}
\end{equation}
\begin{corollary}[Separable updates]
\label{cor:separable}
Under \eqref{eq:MFsep}, the covariance of $q(\xz)$ is diagonal. Collecting the per-component variances in the vector $\bm\sigma^2 = (\sigma_1^2, \dots, \sigma_n^2)$, we have $\bm\Sigma = \mathrm{diag}(\bm\sigma^2)$. The per-component mean-field target is Gaussian, $q_i^r = \Norm(\mu_i^r, (\sigma_i^r)^2)$, and reads \cite{zheng2015efficient}
\begin{align}
&((\sigma_i^r)^2)^{-1} = \langle \gamma_b\rangle\,(\Ab^\top \Ab)_{ii} + \langle \gamma_r\rangle,
\label{eq:Sigmar_i}\\
&\mu_i^r = (\sigma_i^r)^2\Bigl(\langle \gamma_b\rangle\bigl[(\Ab^\top \yy)_i - (\Ab^\top \Ab\,\bm\mu_{-i})_i\bigr] + \langle \gamma_r\rangle\,(\xhz)_i\Bigr),
\label{eq:mur_i}
\end{align}
where $\bm\mu_{-i}$ is $\bm\mu$ with the $i$-th component set to zero. \\Coordinate Ascent Variational Inference would treat the $n$ components one by one, a sequential sweep impractical at this dimension. We adopt the scheme of Fraysse \textit{et al.} \cite{fraysse2014measure, zheng2015efficient}, which updates them simultaneously, hence \eqref{eq:Sigmar_i}--\eqref{eq:mur_i}. Writing $\bm\mu^{(k)}$ and $\bm\sigma^{2,(k)}$ for the parameters of $q(\xz)$ at inner iteration $k$, and $\bm\mu^{\mathrm{r}}$, $\bm\sigma^{2,\mathrm{r}} = ((\sigma_1^r)^2,\dots,(\sigma_n^r)^2)$ for the mean-field target \eqref{eq:Sigmar_i}--\eqref{eq:mur_i}, the update is a relaxed step
\begin{align}
\bm\mu^{(k+1)} &= (1-s_k)\,\bm\mu^{(k)} + s_k\,\bm\mu^{\mathrm{r}},
\label{eq:opt_step_mu}\\
\bm\sigma^{2,(k+1)} &= (1-s_k)\,\bm\sigma^{2,(k)} + s_k\,\bm\sigma^{2,\mathrm{r}},
\label{eq:opt_step_sigma}
\end{align}
where the step size $s_k$ maximizes the free energy $\mathcal{F}$ and admits a closed form \cite{fraysse2014measure}. After $K$ iterations, $\bm\mu_{\mathrm{post}}
= \bm\mu^{(K)}$.
\end{corollary}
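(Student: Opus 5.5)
The plan is to apply the mean-field rule \eqref{eq:qref} block by block, now with one block per pixel. Each $x_{0,i}$ is a block, and the remaining blocks are $q_l(x_{0,l})$ for $l\neq i$, together with $q(\gamma_r)$ and $q(\gamma_b)$. The expectation is then taken term by term in the log-joint \eqref{eq:logjoint}. Because the factorization \eqref{eq:MFsep} makes the components of $\xz$ independent under $q$, the covariance of $q(\xz)$ is diagonal; this gives the first claim $\bm\Sigma=\mathrm{diag}(\bm\sigma^2)$ directly.

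For the per-component target, I would collect the terms of \eqref{eq:logjoint} that depend on $x_{0,i}$. The precisions enter linearly, so their expectations under $q(\gamma_b)q(\gamma_r)$ simply replace $\gamma_b,\gamma_r$ by $\langle\gamma_b\rangle,\langle\gamma_r\rangle$.

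The data term is handled by expanding $\|\yy-\Ab\xz\|^2$ with $\xz = x_{0,i}\bm e_i + \bm x_{-i}$. This gives $(\Ab^\top\Ab)_{ii}x_{0,i}^2 - 2x_{0,i}\bigl[(\Ab^\top\yy)_i - (\Ab^\top\Ab\,\bm x_{-i})_i\bigr]$ plus terms free of $x_{0,i}$. The cross term is linear in $\bm x_{-i}$, so averaging over $\prod_{l\neq i}q_l$ replaces $\bm x_{-i}$ by $\bm\mu_{-i}$. This is the one place where independence matters: no second moments of other components multiply $x_{0,i}$.

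The prior term $\|\xz-\xhz\|^2$ contributes $x_{0,i}^2 - 2x_{0,i}(\xhz)_i$. The log of $q_i^r$ is therefore quadratic in $x_{0,i}$, so $q_i^r$ is Gaussian. Completing the square gives the precision \eqref{eq:Sigmar_i} as the coefficient of $-\tfrac12 x_{0,i}^2$, and the mean \eqref{eq:mur_i} as the linear coefficient divided by that precision.

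The remaining claims concern the simultaneous relaxed update \eqref{eq:opt_step_mu}--\eqref{eq:opt_step_sigma}. These are not derived here: I would cite the measure-space gradient scheme of \cite{fraysse2014measure,zheng2015efficient}, in which moving each $q_i$ toward its target $q_i^r$ along the stated Gaussian parameter path is a valid ascent direction for $\Fq$. For a Gaussian family, $\Fq$ restricted to that line is a smooth function of $s$. Its maximizer can be obtained in closed form because $\Fq$ is quadratic in $\bm\mu$ and involves $\bm\sigma^2$ only through linear terms plus $\tfrac12\sum_i\log\sigma_i^2$.

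The main obstacle is this last point, namely justifying the closed-form optimal step and the monotone increase of $\Fq$ under the joint update. The log-entropy term makes exact maximization in $s$ only approximately closed-form. I would first try the quadratic approximation used in \cite{fraysse2014measure}, and otherwise simply rely on that reference for the step-size formula.
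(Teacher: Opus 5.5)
Your proposal is correct and takes the same route as the paper, which gives no separate proof of this corollary. The targets \eqref{eq:Sigmar_i}--\eqref{eq:mur_i} follow from applying \eqref{eq:qref} pixel by pixel; your completing-the-square computation spells out what the paper attributes to \cite{zheng2015efficient}, and the relaxed step and its step size are simply taken from \cite{fraysse2014measure}. Your caveat about $s_k$ is well founded: along the linear path in $(\bm\mu,\bm\sigma^2)$ the entropy term $\tfrac12\sum_i\log\sigma_i^2$ means the exact line maximizer generally has no closed form, so that claim rests on the citation in the paper exactly as it does in your proposal.
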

By conjugacy, the expectation of \eqref{eq:logjoint} under $q(\xz)$ keeps
$q(\gamma_b)$ and $q(\gamma_r)$ in the Gamma family, with parameters
\begin{align}
\tilde a_b &= \frac{m}{2}, \label{eq:atilde_b_sep}\\
\tilde b_b &= \frac12 \Bigl( \|\yy - \Ab\bm\mu\|_2^2 + \sum_{i=1}^{n} (\Ab^\top\Ab)_{ii}\,\sigma_i^2 \Bigr), \label{eq:btilde_b_sep}
\end{align}
and similarly
\begin{align}
\tilde a_r &= \frac{n}{2}, \label{eq:atilde_r_sep}\\
\tilde b_r &= \frac12 \Bigl( \|\bm\mu - \xhz\|_2^2 + \sum_{i=1}^{n} \sigma_i^2 \Bigr), \label{eq:btilde_r_sep}
\end{align}
with expectations $\langle \gamma_b\rangle = \tilde a_b/\tilde b_b$ and $\langle \gamma_r\rangle = \tilde a_r/\tilde b_r$. The relaxed update of $q(\xz)$ and the two Gamma updates are alternated within each reverse step; Algorithm~\ref{alg:FB-GDM} collects them.
\subsection{Algorithm}
\label{sec:methode:algo}
Algorithm~\ref{alg:FB-GDM} details one reverse step, organized like Algorithm~\ref{alg:pigdm}, without guidance weighting. All components not explicitly modified in Algorithm~\ref{alg:FB-GDM} are kept strictly identical to those of $\Pi$GDM. The initialization $\bm\Sigma \gets (1-\abart)\,\Iden$ reuses $\Pi$GDM's heuristic $r_t^2 = 1-\abart$: FB-GDM starts from $\Pi$GDM's setting, then refines it.

\begin{algorithm}[h]
\caption{FB-GDM, one reverse step in the DDPM sampler}
\label{alg:FB-GDM}
\begin{algorithmic}[1]
\REQUIRE $\xt,\ \yy,\ \Ab,\ \eps_\theta,\ \abart,\ K$
\STATE \textbf{$\triangleright$ Score computation}
\STATE $\eps_\theta \gets \eps_\theta(\xt, t)$ \COMMENT{network}
\STATE $\xhz \gets \big(\xt - \sqrt{1-\abart}\,\eps_\theta\big)\big/\sqrt{\abart}$ \COMMENT{Tweedie, \eqref{eq:tweedie}}
\STATE $\bm\mu \gets \xhz,\qquad \bm\Sigma \gets (1-\abart)\,\Iden$ \COMMENT{initialization}
\STATE $q(\gamma_b) \gets \mathcal{G}(\tilde a_b,\tilde b_b)$, $\ \langle\gamma_b\rangle \gets \tilde a_b/\tilde b_b$ \COMMENT{\eqref{eq:atilde_b_sep},\eqref{eq:btilde_b_sep},\eqref{eq:exp_gamma}}
\STATE $q(\gamma_r) \gets \mathcal{G}(\tilde a_r,\tilde b_r)$, $\ \langle\gamma_r\rangle \gets \tilde a_r/\tilde b_r$ \COMMENT{\eqref{eq:atilde_r_sep},\eqref{eq:btilde_r_sep},\eqref{eq:exp_gamma}}
\FOR{$k = 1,\ldots,K$}
  \STATE $q^r(\xz)=\Norm(\bm\mu^r,\bm\Sigma^r)$ \COMMENT{\eqref{eq:Sigmar_i},\eqref{eq:mur_i}}
  \STATE $q(\xz)=\Norm(\bm\mu,\bm\Sigma)$
  \COMMENT{\eqref{eq:opt_step_mu},\eqref{eq:opt_step_sigma}}
  \STATE $q(\gamma_b) \gets \mathcal{G}(\tilde a_b,\tilde b_b)$, $\langle\gamma_b\rangle \gets \tilde a_b/\tilde b_b$ \COMMENT{\eqref{eq:atilde_b_sep},\eqref{eq:btilde_b_sep}}
  \STATE $q(\gamma_r) \gets \mathcal{G}(\tilde a_r,\tilde b_r)$, $\ \langle\gamma_r\rangle \gets \tilde a_r/\tilde b_r$ \COMMENT{\eqref{eq:atilde_r_sep},\eqref{eq:btilde_r_sep}}
\ENDFOR
\STATE $\bm\mu_{\text{post}} \gets \bm\mu$
\STATE $\text{score}(\yy \mid \xt) \gets \big(\partial\xhz/\partial\xt\big)^{\!\top}\big(\bm\mu_{\text{post}} - \xhz\big)/\big(1-\abart\big)$
\STATE $\text{score}(\xt)\gets-\,\eps_\theta/\sqrt{1-\abart}$
\COMMENT{\eqref{eq:score}}
\STATE $\text{score}(\xt \mid \yy) \gets \text{score}(\xt) + \text{score}(\yy\mid \xt)$ \COMMENT{\eqref{eq:score_closed}}
\STATE \textbf{$\triangleright$ DDPM reverse step}
\STATE $\boldsymbol{z} \sim \Norm(0,\Iden)$
\STATE $\bm x_{t-1} \gets \dfrac{1}{\sqrt{\alpha_t}}\Big(\xt + \beta_t\,\text{score}(\xt \mid \yy)\Big) + \sqrt{\tilde\beta_t}\,\boldsymbol{z}$ \COMMENT{\eqref{eq:ddpm_score_conditionnel}}
\RETURN $\bm x_{t-1}$
\end{algorithmic}
\vspace{-3pt}
\end{algorithm}
\vspace{-3pt}
 % ===========================================================
 % ===========================================================
\section{Experiments}
\label{sec:experiments}
\begin{figure*}[t]
\centering
\cmpsetup{0.135}

\begin{tabular}{@{}ccccccc@{}}
\footnotesize Truth &
\footnotesize Obs &
\footnotesize $\Pi$GDM$^{\text{nom}}$ &
\footnotesize $\Pi$GDM$^{\dag}$ &
\footnotesize $\Pi$GDM$^{\star}$ &
\footnotesize DPS ($\zeta'{=}1$) &
\footnotesize FB-GDM (ours) \\[1pt]

\cmprowSeven{blur/img_8}{}{8.12\,dB}{28.10\,dB}{28.14\,dB}{28.36\,dB}{29.09\,dB}
\cmprowSeven{blur/img_11}{}{8.33\,dB}{28.16\,dB}{28.85\,dB}{28.55\,dB}{28.97\,dB}
\cmprowSeven{blur/img_25}{}{8.37\,dB}{30.76\,dB}{30.85\,dB}{30.76\,dB}{31.27\,dB}
 
\end{tabular}
\caption{Visual results for Gaussian deblurring ($9\times9$, $\text{FWHM}= 3.5$ px) at fixed measurement $\mathrm{SNR}{=}20$ dB. From left to right: ground truth, degraded observation, nominal $\Pi$GDM, one-dimensional $\Pi$GDM oracle, two-dimensional $\Pi$GDM oracle, DPS, and FB-GDM. PSNR values are reported below each reconstruction.}
\label{fig:blur}
\vspace{-10pt}
\end{figure*}
\subsection{Protocol}
\subsubsection{Setup}
\label{sec:setup}
The prior is an unconditional DDPM~\cite{ho2020ddpm} pretrained on CelebA-HQ at $256\times256$~\cite{karras2018progressive}\footnote{Model: \url{https://huggingface.co/google/ddpm-celebahq-256}}\footnote{Data: \url{https://www.kaggle.com/datasets/denislukovnikov/celebahq256-images-only}} and never fine-tuned. The sampler is the ancestral DDPM with $T{=}1000$ steps. Except for the noise-robustness study, the measurement signal-to-noise ratio is fixed at $\mathrm{SNR}=20$~dB.
FB-GDM runs with $K{=}100$ inner iterations of the variational inference at each reverse step. It receives only the observation $\yy$ and the operator $\Ab$: neither the noise level nor the ground truth, and no hyperparameter is adjusted from one experiment to the next.\\
\indent We measure reconstruction quality with PSNR and SSIM~\cite{wang2004ssim}.
\subsubsection{Comparison with the state of the art}
We compare FB-GDM (Algorithm~\ref{alg:FB-GDM}) with $\Pi$GDM  (Algorithm~\ref{alg:pigdm}) ~\cite{song2023pigdm} and DPS~\cite{chung2023dps}. Both are run in the setup of Section~\ref{sec:setup}: same pretrained prior, same DDPM sampler, same degraded observations and the same noise seed for the initialization $\bm x_T$ of the reverse trajectory. Only the conditional score changes from one method to the next. We run $\Pi$GDM under three settings that use more and more information about the problem. 
$\Pi$GDM$^{\text{nom}}$ is the nominal setting: $w=1$ and $\sigma_b^2$ fixed at the simulated value, with no search.
$\Pi$GDM$^{\dag}$ searches only the guidance weight $w$ over $\{0.01,0.05,0.1,0.5,1,2\}$ and keeps $\sigma_b^2$ fixed at the simulated value. Both settings therefore consider the noise level known.
$\Pi$GDM$^{\star}$ is the oracle: it jointly searches $w$ and the value of $\sigma_b^2$ used by the guidance on the 2D grid $\{0.01,0.05,0.1,0.5,1,2\}\times\{1,2.5,5,10\}{\times}10^{-3}$, so that the guidance variance is no longer tied to the simulated value.
For $\Pi$GDM$^{\dag}$ and $\Pi$GDM$^{\star}$, the candidates are ranked on PSNR against the ground truth.
We run DPS with its single hyperparameter $\zeta'$ set to the values recommended per operator in \cite{chung2023dps}.
 % ===========================================================
 % ===========================================================
\subsection{First experiment: Gaussian deblurring}
\label{subsec:exp_blur}
The operator is a $9\times9$ Gaussian blur with a full width at half maximum (FWHM) of $3.5$ pixels, and the measurement signal-to-noise ratio is fixed at $\mathrm{SNR} = 20$ dB. We evaluate on $N_{\text{img}} = 30 $ CelebA-HQ validation images, with the same noise seed for the initialization $x_T$ for every method so that all reconstructions see the same observation. \\
\indent Table~\ref{tab:blur} shows that FB-GDM achieves the highest mean PSNR and SSIM, with $28.86$~dB and $0.822$, respectively, without using the noise level, the ground truth, or task-specific calibration. Figure~\ref{fig:blur} shows clean reconstructions that remain visually faithful to the ground truth.\\
\indent At its nominal setting, $\Pi$GDM$^{\text{nom}}$ degrades to $15.28$~dB despite being given the true noise level, because the unit weight amplifies noise along the near-singular directions of the blur. Searching over $w$ raises $\Pi$GDM$^{\dag}$ to $28.75$~dB, while the additional search over the effective guidance variance brings only a marginal gain, with $\Pi$GDM$^{\star}$ reaching $28.85$~dB. These settings require six and twenty-four generations, respectively, and use the ground truth for selection. DPS reaches $28.79$~dB at $\zeta'{=}1$. Visually, FB-GDM, DPS, and $\Pi$GDM$^{\star}$ are nearly indistinguishable in Figure~\ref{fig:blur}, whereas $\Pi$GDM$^{\text{nom}}$ is dominated by noise.
\vspace{-13pt}
\begin{table}[h]
\centering
\caption{Gaussian deblurring ($9\times9$, $\text{FWHM}= 3.5$ px) on CelebA-HQ at fixed measurement $\mathrm{SNR}=20$ dB. PSNR and SSIM are averaged over $N_{\text{img}}=30$ validation images.}
\label{tab:blur}
\setlength{\tabcolsep}{11pt}
\renewcommand{\arraystretch}{0.5}
\begin{tabular}{lcc|ccc}
\toprule
Method & \multicolumn{2}{c}{PSNR} & \multicolumn{2}{c}{SSIM} \\
\cmidrule(lr){2-3}\cmidrule(lr){4-5}
 & Mean & Std & Mean & Std \\
\midrule
FB-GDM (ours) & \textbf{28.86} & 2.29 & \textbf{0.822} & 0.059  \\
\midrule
DPS ($\zeta'=1$) & 28.79 & 2.31 & 0.818 & 0.060 \\
\midrule
$\Pi$GDM$^{\mathrm{nom}}$ & 15.28 & 6.48 & 0.342 & 0.313  \\
$\Pi$GDM$^{\dag}$  & 28.75 & 2.33 & 0.819 & 0.059  \\
$\Pi$GDM$^{\star}$  & 28.85 & 2.23 & 0.818 & 0.057 \\
\bottomrule
\end{tabular}
\vspace{-20pt}
\end{table}
 % ===========================================================
 % ===========================================================
\subsection{Second experiment: robustness to the operator}
\label{subsec:exp_operator}
 \begin{figure*}[t]
\centering
\cmpsetup{0.135}
\begin{tabular}{c@{\hspace{3mm}}ccccccc@{}}
&
\footnotesize Truth &
\footnotesize Obs &
\footnotesize $\Pi$GDM$^{\text{nom}}$ &
\footnotesize $\Pi$GDM$^{\dag}$ &
\footnotesize $\Pi$GDM$^{\star}$ &
\footnotesize DPS ($\zeta'$ paper) &
\footnotesize FB-GDM (ours) \\[2pt]
\cmprowSevenOp
  {Gaussian blur}
  {blur/img_8}
  {}
  {8.12\,dB}
  {28.10\,dB}
  {28.14\,dB}
  {28.36\,dB}
  {29.09\,dB}

\cmprowSevenOp
  {Uniform kernel}
  {avg1/img_8}
  {}
  {7.05\,dB}
  {25.52\,dB}
  {26.98\,dB}
  {19.48\,dB}
  {26.71\,dB}

\cmprowSevenOp
  {SR $\times4$}
  {sr/img_8}
  {}
  {21.34\,dB}
  {26.71\,dB}
  {26.71\,dB}
  {27.27\,dB}
  {27.45\,dB}
\end{tabular}
\caption{Visual results for operator robustness at fixed measurement $\mathrm{SNR}{=}20$ dB. Each row corresponds to a different forward operator: Gaussian blur, uniform kernel and super-resolution $\times4$. From left to right: ground truth, degraded observation, nominal $\Pi$GDM, one-dimensional $\Pi$GDM oracle, two-dimensional $\Pi$GDM oracle, DPS, and FB-GDM. PSNR values are reported below each reconstruction.}
\label{fig:operator}
\vspace{-10pt}
\end{figure*}
We now leave FB-GDM untouched and change only the forward operator $\Ab$ of \eqref{eq:inverse_pb}. We test three operators at the same measurement SNR of $20$ dB. The first is the Gaussian blur of the first experiment. The second is a $13\times13$ uniform square kernel with constant coefficients. The third is super-resolution $\times4$: a $9\times9$ Gaussian anti-aliasing kernel with a full width at half maximum (FWHM) of $3.5$ pixels, followed by a decimation of factor $4$. All operators are implemented as circular convolutions.\\
\indent Tables~\ref{tab:operator_psnr} and \ref{tab:operator_ssim}, and Figure~\ref{fig:operator} show that FB-GDM is robust to the change of operator: it stays between $26.76$ and $28.86$~dB, with no failure case, without access to the ground truth and without any operator-specific retuning.\\
\begin{table}[h]
\centering
\caption{Operator robustness at fixed measurement $\mathrm{SNR}=20$ dB. PSNR values are reported as mean and standard deviation over the test images. Bold indicates the best method for each operator.}
\label{tab:operator_psnr}
\setlength{\tabcolsep}{8.38pt}
\renewcommand{\arraystretch}{1.1}
\scriptsize
\begin{tabular}{lcc|cc|cc}
\toprule
\textbf{PSNR}
& \multicolumn{2}{c|}{Gaussian}
& \multicolumn{2}{c|}{Uniform kernel}
& \multicolumn{2}{c}{SR $\times4$}\\
\cmidrule(lr){2-3}\cmidrule(lr){4-5}\cmidrule(lr){6-7}
Method & Mean & Std & Mean & Std & Mean & Std\\
\midrule
FB-GDM (ours)
& \textbf{28.86} & 2.29
& 26.76 & 2.06
& 27.23 & 2.09 \\
\midrule
DPS
& 28.79 & 2.31
& 19.53 & 1.53
& \textbf{27.82} & 2.16 \\
\midrule
$\Pi$GDM$^{\mathrm{nom}}$
& 15.28 & 6.48
& 17.02 & 7.65
& 20.53 & 1.36\\
$\Pi$GDM$^{\dag}$
& 28.75 & 2.33
& 26.83 & 2.16
& 27.58 & 2.07 \\
$\Pi$GDM$^{\star}$
& 28.85 & 2.23
& \textbf{27.48} & 2.02
& 27.58 & 2.07\\
\bottomrule
\end{tabular}
\vspace{-10pt}
\end{table}
\begin{table}[h]
\centering
\caption{Operator robustness at fixed measurement $\mathrm{SNR}=20$ dB. SSIM values are reported as mean and standard deviation over the test images. Bold indicates the best method for each operator.}
\label{tab:operator_ssim}
\setlength{\tabcolsep}{7pt}
\renewcommand{\arraystretch}{1.1}
\scriptsize
\begin{tabular}{lcc|cc|cc}
\toprule
\textbf{SSIM}
& \multicolumn{2}{c|}{Gaussian}
& \multicolumn{2}{c|}{Uniform kernel}
& \multicolumn{2}{c}{SR $\times4$}\\
\cmidrule(lr){2-3}\cmidrule(lr){4-5}\cmidrule(lr){6-7}
Method & Mean & Std & Mean & Std & Mean & Std \\
\midrule
FB-GDM (ours)
& \textbf{0.822} & 0.059
& 0.752 & 0.070
& 0.769 & 0.065\\
\midrule
DPS
& 0.818 & 0.060
& 0.532 & 0.097
& \textbf{0.787} & 0.063 \\
\midrule
$\Pi$GDM$^{\mathrm{nom}}$
& 0.342 & 0.313
& 0.419 & 0.313
& 0.417 & 0.079\\
$\Pi$GDM$^{\dag}$
& 0.819 & 0.059
& 0.759 & 0.067
& 0.780 & 0.062\\
$\Pi$GDM$^{\star}$
& 0.818 & 0.057
& \textbf{0.769} & 0.065
& 0.780 & 0.062\\
\bottomrule
\end{tabular}
\end{table}
\indent For the Gaussian blur, FB-GDM attains the best PSNR of the comparison, within $0.01$~dB of $\Pi$GDM$^{\star}$, without tuning. DPS achieves a comparable PSNR at its recommended scale. In contrast, $\Pi$GDM$^{\text{nom}}$ degrades to $15.28$~dB, even though it is given the true noise level, because its fixed unit weight is not suited to the operator.\\
\indent For super-resolution $\times 4$, FB-GDM stays within $0.4$~dB of both oracles $\Pi$GDM$^{\dag}$ and $\Pi$GDM$^{\star}$ without tuning, while the tuned DPS attains the best scores ($27.82$~dB, $0.787$ SSIM) and $\Pi$GDM$^{\text{nom}}$ reaches only $20.53$~dB. This again shows that the nominal guidance scale is not robust across operators.\\
\indent On the uniform kernel, FB-GDM reaches $26.76$~dB and $0.752$ SSIM without tuning, close to $\Pi$GDM$^{\dag}$ ($26.83$~dB, $0.759$ SSIM). The fully tuned oracle $\Pi$GDM$^{\star}$ performs best ($27.48$~dB, $0.769$ SSIM), but the gain over FB-GDM remains moderate: $0.72$~dB and $0.017$ SSIM. By contrast, fixed-scale methods degrade strongly: $\Pi$GDM$^{\mathrm{nom}}$ falls to $17.02$~dB and $0.419$ SSIM, while DPS reconstruction drops to $19.53$~dB and $0.532$ SSIM and visually replaces the observed identity by a hallucinated face unsupported by the measurements.\\
\indent The gain of $\Pi$GDM$^{\star}$ comes from a costly calibration: for this operator, both the guidance weight and the effective guidance variance must be re-adjusted, with the selected variance differing from the true noise variance by about a factor of three. Finding this setting required an enlarged two-dimensional ground-truth grid with more than $130$ evaluations. FB-GDM therefore remains close to the best per-operator oracle from a single unsupervised pass, without access to the noise level or the ground truth.
 % ===========================================================
 % ===========================================================
\subsection{Third experiment: robustness to the noise level}
\label{subsec:exp_noise}
\indent We now fix the operator, here super-resolution $\times4$, and vary the signal-to-noise ratio over a wide range, $15$ values linearly spaced from 1 to 30 dB. FB-GDM is again left untouched: it re-infers its precisions at every reverse step, without the true noise as input.\\
\indent To measure how much tuning $\Pi$GDM needs when the noise changes, we use frozen oracles. We calibrate $\Pi$GDM$^{\star}$, the full 2D grid over $w$ and $\sigma_b^2$, at two reference SNR levels, $\mathrm{SNR} \in \{5,30\}$ dB. We then freeze each best setting and sweep the whole noise range without re-tuning it. This isolates one effect: the price of a fixed hyperparameter when the true observation noise drifts away from the level it was tuned for.\\
\indent Figure~\ref{fig:noise} shows that FB-GDM is robust to the noise level: it degrades smoothly as the SNR decreases, as expected when the measurement carries less information, and stays within $0.7$~dB of the best frozen oracle at every SNR, without the true noise variance as input and re-inferring its precisions at each reverse step.\\
\indent The two frozen $\Pi$GDM$^{\star}$ curves show the two failure modes of a fixed calibration. The blue curve, calibrated at $30$ dB, weights the data strongly: it slightly exceeds FB-GDM at high $\mathrm{SNR}$, but collapses below $13$ dB at $\mathrm{SNR}=1$ dB, more than $9$ dB under FB-GDM. The green curve, calibrated at $5$ dB, makes the opposite trade-off: it leads marginally at low $\mathrm{SNR}$, but saturates near $25.5$ dB at high $\mathrm{SNR}$, about $3$ dB below FB-GDM. Each frozen curve crosses FB-GDM once and lies above it only in the SNR regime for which that curve was tuned. FB-GDM follows the upper envelope of both, recovering by inference what each oracle reaches only through a ground-truth search at one noise level.
\begin{figure}[h]
\centering
\begin{tikzpicture}
\begin{axis}[
  width=\columnwidth, height=0.65\columnwidth,
  xlabel={$\mathrm{SNR}$ (dB)}, ylabel={mean PSNR (dB)},
  xmin=1, xmax=30,
  legend pos=south east, legend style={font=\scriptsize, fill opacity=0.85},
  grid=both, grid style={gray!18}, tick label style={font=\footnotesize},
]
\addplot[red, very thick, mark=*, mark size=1.4] coordinates {
(1.000,22.493) (3.071,23.356) (5.143,24.146) (7.214,24.581) (9.286,25.164) (11.357,25.639) (13.429,26.182) (15.500,26.697) (17.571,27.080) (19.643,27.388) (21.714,27.684) (23.786,27.939) (25.857,27.968) (27.929,28.207) (30.000,28.199)
};
\addlegendentry{FB-GDM (no tuning)}
 
\addplot[green!55!black, mark=square*, mark size=1.3] coordinates {
(1.000,22.877) (3.071,23.947) (5.143,24.662) (7.214,24.934) (9.286,25.179) (11.357,25.287) (13.429,25.348) (15.500,25.384) (17.571,25.384) (19.643,25.442) (21.714,25.563) (23.786,25.488) (25.857,25.433) (27.929,25.492) (30.000,25.494)
};
\addlegendentry{$\Pi$GDM$^{\star}$ (cal. $5$ dB)}
 
\addplot[blue, mark=diamond*, mark size=1.5] coordinates {
(1.000,12.741) (3.071,13.302) (5.143,14.514) (7.214,16.169) (9.286,18.308) (11.357,20.906) (13.429,23.273) (15.500,25.119) (17.571,26.431) (19.643,27.507) (21.714,27.941) (23.786,28.286) (25.857,28.567) (27.929,28.743) (30.000,28.846)
};
\addlegendentry{$\Pi$GDM$^{\star}$ (cal. $30$ dB)}
\end{axis}
\end{tikzpicture}
\caption{Robustness to the signal-to-noise ratio on super-resolution $\times4$, $\mathrm{SNR}$ swept from $1$ to $30$ dB. FB-GDM uses no tuning; each $\Pi$GDM$^{\star}$ curve is calibrated on the ground truth at a single SNR, then frozen.}
\label{fig:noise}
\vspace{-10pt}
\end{figure}

\subsection{Fourth experiment: out-of-distribution generalization and hallucinations}
\label{subsec:exp_ood}

We now change the image distribution itself. The operator is a $9\times9$ uniform square kernel with constant coefficients at $\mathrm{SNR}{=}20$~dB, and the prior remains the CelebA-HQ face model; we apply them to $N_{\mathrm{img}}{=}30$ ImageNet images\footnote{Data: \url{https://www.kaggle.com/datasets/dimensi0n/imagenet-256}.}, which the prior never saw. The averaging kernel suppresses most of the high-frequency content, increasing the influence of the prior and making hallucinations a potential failure mode out of distribution. As before, FB-GDM is left unchanged and still receives neither the noise level nor the ground truth.

Figure~\ref{fig:ood} shows that FB-GDM is robust to this change of image distribution. Its reconstructions remain consistent with the degraded observations and preserve the main structures of the non-face images, without introducing face-like artifacts. In contrast, DPS exhibits hallucinated structures that are not supported by the measurements: facial features appear on the cat image, while the roof and surrounding textures of the cabin are altered into face-like patterns. The transferred $\Pi$GDM$^{\star}$ calibration also produces visually consistent reconstructions, but relies on a calibration obtained in distribution, whereas FB-GDM requires no calibration transfer or retuning.
\begin{figure*}[t]
\centering
\setlength{\tabcolsep}{1pt}
\cmpsetup{0.2}
\begin{tabular}{@{}ccccc@{}}
\scriptsize Truth &
\scriptsize Obs &
\scriptsize $\Pi$GDM$^{\star}$ &
\scriptsize DPS ($\zeta'{=}1$) &
\scriptsize FB-GDM (ours) \\[1pt]
\cmprowFive{ood/ex2}{}{27.45\,dB}{18.19\,dB}{27.16\,dB}
\cmprowFive{ood/ex3}{}{23.67\,dB}{17.40\,dB}{24.87\,dB}
\end{tabular}
\caption{Visual results for out-of-distribution robustness. A CelebA-HQ face prior is applied to ImageNet images using a $9\times9$ uniform averaging kernel at $\mathrm{SNR}{=}20$~dB. From left to right: ground truth, observation, transferred $\Pi$GDM$^{\star}$, DPS ($\zeta'{=}1$), and FB-GDM. DPS hallucinates face-like structures that are unsupported by the observations, whereas FB-GDM remains faithful to the measurements while preserving the scene content.}
\label{fig:ood}
\vspace{-10pt}
\end{figure*}

\subsection{Computational cost}
\label{subsec:exp_cost}
We measure the cost of the three methods at a fixed operator, here super-resolution $\times 4$, on a single GeForce RTX 3070 at $256\times256$, with $T{=}1000$ DDPM steps for all methods.
Table~\ref{tab:cost} reports per-run times averaged over the
reconstructions of the noise-robustness study. One FB-GDM
reconstruction takes $165$s, against $122$s for $\Pi$GDM and $116$s for DPS. The $35\%$ overhead per run is the price of the $K{=}100$ variational updates. Each update is linear in $n$, so this cost is stable across operators, whereas that of $\Pi$GDM is driven by the conjugate-gradient solver, hence by the conditioning of $\Ab$.
The relevant comparison is the total cost of a calibrated result. A single FB-GDM pass suffices: under $3$ minutes. $\Pi$GDM$^{\dag}$ and $\Pi$GDM$^{\star}$ require $6$ and $24$ generations per calibration, about $12$ and $49$ minutes here, repeated at every change of operator or noise level, and both rank their candidates on the ground truth. The single run of DPS does not reflect the per-task tuning of $\zeta'$ performed offline by its authors, and this fixed setting fails on the uniform kernel, as reported in the operator-robustness experiment. The Bayesian inference thus replaces the grid search at the cost of a single run.
\begin{table}[!h]
\centering
\caption{Computational cost on super-resolution $\times 4$ at
$256\times256$, $T{=}1000$ DDPM steps for all methods. Per-run times are averaged over the reconstructions of the noise-robustness study (Section~\ref{subsec:exp_noise}).}
\label{tab:cost}
\setlength{\tabcolsep}{11pt}
\renewcommand{\arraystretch}{0.5}
\begin{tabular}{lccc}
\toprule
Method & Time per run (s) & Runs & Total (min) \\
\midrule
FB-GDM (ours)            & $165 \pm 6$ & $1$  & $2.8$  \\
\midrule
DPS                      & $116 \pm 1$ & $1$  & $1.9$  \\
\midrule
$\Pi$GDM$^{\mathrm{nom}}$ & $122 \pm 1$ & $1$  & $2.0$  \\
$\Pi$GDM$^{\dag}$        & $122 \pm 1$ & $6$  & $12.2$ \\
$\Pi$GDM$^{\star}$       & $122 \pm 1$ & $24$ & $48.9$ \\
\bottomrule
\end{tabular}
\vspace{-10pt}
\end{table}
\section{Conclusion and discussion}
We introduced FB-GDM, a fully Bayesian guidance method for diffusion models applied to high-dimensional linear inverse problems. Starting from the Gaussian approximation of $\Pi$GDM, we derived a closed-form conditional score and replaced its tuned hyperparameters with the precisions of the denoising approximation and of the observation likelihood, inferred at each reverse step by variational inference. A separable factorization keeps every update linear in the image dimension, so the method requires only the observation and the forward operator and runs at a cost comparable to a single $\Pi$GDM pass.\\
\indent FB-GDM matches ground-truth $\Pi$GDM oracles in distribution and stays close to a per-problem oracle under changes of operator, of noise level, and of image distribution, where fixed calibrations degrade or hallucinate; the inferred precision adapts to each new observation without supervision.\\
\indent Overall, FB-GDM requires no task-specific tuning and has a computational cost comparable to a single $\Pi$GDM run, making it applicable to a broad range of linear inverse problems.\\
\indent Future work will investigate scientific imaging problems that admit a linear or locally linearized forward model, including accelerated MRI, X-ray computed tomography, and radio-interferometric or deconvolution-based reconstruction in astronomy, where ground-truth images are typically unavailable.
\section*{Acknowledgment}
The authors thank the SIEN Department of École Normale Supérieure Paris-Saclay for providing access to the NVIDIA RTX 3070 workstations used in the experiments, and Dominique Lesselier for his careful reading of the manuscript and valuable comments.
\appendices
\section{Proof of the closed-form conditional score}
\subsection{General conditional score}
\label{app:closed_form_score1}
We derive the conditional score $\nabla_{\xt}\log p(\xt\mid\yy)$ under the model assumptions. We denote $r_t^2 = 1-\abart$.
Working likelihood and denoising approximation. We use a Gaussian observation likelihood with working variance $\sigma_b^2$, together with the Gaussian denoising approximation:
\begin{align}
\yy\mid\xz &\sim \Norm(\Ab\xz,\,\sigma_b^2\Iden), \label{eq:app_lik}\\
\xz\mid\xt &\sim \Norm(\xhz(\xt),\,r_t^2\Iden). \label{eq:app_prior_cond}
\end{align}
The Markov chain $\xt\to\xz\to\yy$ of Figure~\ref{fig:graphical_model} gives the conditional independence $\yy\perp\xt\mid\xz$, that is $p(\yy\mid\xz,\xt)=p(\yy\mid\xz)$.
\subsubsection*{Two gradient identities}
For the prior distribution,
\begin{equation}
\nabla_{\xt}\,p(\xt) = p(\xt)\,\nabla_{\xt}\log p(\xt).
\label{eq:app_grad_prior}
\end{equation}
For \eqref{eq:app_prior_cond}, Gaussian in $\xz$, we have $\log p(\xz\mid\xt) = -\tfrac{1}{2r_t^2}\|\xz-\xhz(\xt)\|^2 + \text{const.}$, where the constant does not depend on $\xt$. Since $\nabla_{\xt}\|\xz-\xhz(\xt)\|^2 = -2\,\frac{\partial\xhz}{\partial\xt}^\top\big(\xz-\xhz(\xt)\big)$, it follows that
\begin{equation}
\nabla_{\xt}\log p(\xz\mid\xt) = \frac{1}{r_t^2}\,\frac{\partial\xhz}{\partial\xt}^\top\big(\xz-\xhz(\xt)\big),
\label{eq:app_grad_logcond}
\end{equation}
and therefore
\begin{equation}
\nabla_{\xt}\,p(\xz\mid\xt) = p(\xz\mid\xt)\,\frac{1}{r_t^2}\,\frac{\partial\xhz}{\partial\xt}^\top\big(\xz-\xhz(\xt)\big).
\label{eq:app_grad_cond}
\end{equation}
\subsubsection*{Joint distribution}
By the chain and the conditional independence,
\begin{align}
p(\xt,\yy) &= p(\xt)\underbrace{\int p(\yy\mid\xz)\,p(\xz\mid\xt)\,d\xz}_{=\,p(\yy\mid\xt)} \notag\\&= p(\xt)\,p(\yy\mid\xt).
\label{eq:app_joint}
\end{align}
\subsubsection*{Gradient of the joint distribution}
Only $p(\xz\mid\xt)$ and $p(\xt)$ depend on $\xt$, hence
\begin{align}
\nabla_{\xt}\,p(\xt,\yy) ={}& p(\xt)\int p(\yy\mid\xz)\,\nabla_{\xt}\,p(\xz\mid\xt)\,d\xz \notag\\
&+ \nabla_{\xt}\,p(\xt)\int p(\yy\mid\xz)\,p(\xz\mid\xt)\,d\xz.
\label{eq:app_two_terms}
\end{align}
The second integral equals $p(\yy\mid\xt)$; by \eqref{eq:app_grad_prior}, the second term equals $p(\xt)\,p(\yy\mid\xt)\,\nabla_{\xt}\log p(\xt)$. For the first, we inject \eqref{eq:app_grad_cond}:
\begin{equation}
\frac{p(\xt)}{r_t^2}\,\frac{\partial\xhz}{\partial\xt}^\top\!\int p(\yy\mid\xz)\,p(\xz\mid\xt)\,\big(\xz-\xhz\big)\,d\xz.
\label{eq:app_first_term}
\end{equation}
Bayes' rule at fixed $\xt$ gives $p(\yy\mid\xz)\,p(\xz\mid\xt) = p(\yy\mid\xt)\,p(\xz\mid\xt,\yy)$, hence
\begin{align}
&\int p(\yy\mid\xz)\,p(\xz\mid\xt)\,\big(\xz-\xhz\big)\,d\xz \notag\\
&\quad = p(\yy\mid\xt)\!\int p(\xz\mid\xt,\yy)\,\big(\xz-\xhz\big)\,d\xz \notag\\
&\quad = p(\yy\mid\xt)\,\big(\Exp_{\xz\mid\xt,\yy}[\xz]-\xhz\big).
\label{eq:app_bayes}
\end{align}
Collecting \eqref{eq:app_two_terms}--\eqref{eq:app_bayes},
\begin{align}
\nabla_{\xt}\,p(\xt,\yy) = p(\xt)\,p(\yy\mid\xt)\Big[&\tfrac{1}{r_t^2}\frac{\partial\xhz}{\partial\xt}^\top\big(\Exp_{\xz\mid\xt,\yy}[\xz]-\xhz\big) \notag\\
&+ \nabla_{\xt}\log p(\xt)\Big].
\label{eq:app_grad_joint_final}
\end{align}
\subsubsection*{Conclusion}
Since $p(\yy)$ does not depend on $\xt$, we have $\nabla_{\xt}\log p(\xt\mid\yy)=\nabla_{\xt}\log p(\xt,\yy)=\nabla_{\xt}p(\xt,\yy)/p(\xt,\yy)$. Dividing \eqref{eq:app_grad_joint_final} by \eqref{eq:app_joint}:
\begin{equation}
\begin{aligned}
\nabla_{\xt}\log p(\xt\mid\yy) ={}& \frac{1}{r_t^2}\Big(\frac{\partial\xhz}{\partial\xt}\Big)^{\!\top}\big(\Exp_{\xz\mid\xt,\yy}[\xz]-\xhz(\xt)\big)\\[2pt]
&+ \nabla_{\xt}\log p(\xt).
\end{aligned}
\label{eq:app_boxed}
\end{equation}
This establishes Proposition~\ref{prop:score_general}.
\subsection{Gaussian closed form}
\label{app:closed_form_score2}
Under the Gaussian assumptions \eqref{eq:app_lik}--\eqref{eq:app_prior_cond}, the posterior distribution $p(\xz\mid\xt,\yy)$ is Gaussian and its expectation admits the closed form $\Exp_{\xz\mid\xt,\yy}[\xz]=\bm\mu_{\text{post}}$, given by \eqref{eq:mu_post}. The score \eqref{eq:app_boxed} then reads
\begin{equation}
\begin{aligned}
\nabla_{\xt}\log p(\xt\mid\yy) ={}& \frac{1}{r_t^2}\Big(\frac{\partial\xhz}{\partial\xt}\Big)^{\!\top}\big(\bm\mu_{\text{post}}-\xhz(\xt)\big)\\[2pt]
&+ \nabla_{\xt}\log p(\xt).
\end{aligned}
\label{eq:app_closed_mupost}
\end{equation}
We recover Corollary~\ref{cor:closed_form}.
\section{Connection with $\Pi$GDM}
\label{app:link}
To compare FB-GDM with $\Pi$GDM, let us fix the precisions instead of inferring them. We replace the Jeffreys priors \eqref{eq:M3}-\eqref{eq:M4} by Dirac priors, centered on the settings of $\Pi$GDM:
\begin{equation}
p(\gamma_r) = \delta\!\left(\gamma_r - \tfrac{1}{r_t^2}\right), \qquad
p(\gamma_b) = \delta\!\left(\gamma_b - \tfrac{1}{\sigma_b^2}\right).
\label{eq:dirac_priors}
\end{equation}
\begin{proposition}[Connection with $\Pi$GDM]
\label{prop:lien}
Under the Dirac priors \eqref{eq:dirac_priors}, the conditional score of FB-GDM reads
\begin{align}
&\nabla_{\xt}\log p_t(\xt \mid \yy) = \nabla_{\xt}\log p_t(\xt) \notag\\
&+ \Big(\tfrac{\partial \xhz}{\partial \xt}\Big)^{\!\top}\! \Ab^\top\!\left(\sigma_b^2\,\Iden + r_t^2\,\Ab\Ab^\top\right)^{-1}(\yy - \Ab\xhz),
\label{eq:lim_FB-GDM}
\end{align}
while that of $\Pi$GDM \eqref{eq:pigdm_score} reads
\begin{align}
&\nabla_{\xt}\log p_t(\xt \mid \yy) = \nabla_{\xt}\log p_t(\xt) \notag\\
&+ \frac{w\sqrt{\alpha_t\abart}\,r_t^2}{\beta_t}\Big(\tfrac{\partial \xhz}{\partial \xt}\Big)^{\!\top}\! \Ab^\top\!\left(\sigma_b^2\,\Iden + r_t^2\,\Ab\Ab^\top\right)^{-1}(\yy - \Ab\xhz).
\label{eq:lim_pigdm}
\end{align}
Both scores share the same measurement term $\big(\tfrac{\partial \xhz}{\partial \xt}\big)^{\!\top} \Ab^\top(\sigma_b^2\Iden + r_t^2 \Ab\Ab^\top)^{-1}(\yy - \Ab\xhz)$. They differ only by the scalar scale that weights it: exactly $1$ for FB-GDM, fixed by the Bayesian inference, against $w\sqrt{\alpha_t\abart}\,r_t^2/\beta_t$ for $\Pi$GDM, whose weight $w$ is tuned per task.
\end{proposition}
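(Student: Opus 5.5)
The plan is to reduce FB-GDM to the fixed-precision Gaussian case of Corollary~\ref{cor:closed_form}, then rewrite the resulting posterior-mean gap in the same pseudo-inverse form as $\Pi$GDM.

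\textbf{Collapsing the precisions.} Under the Dirac priors \eqref{eq:dirac_priors}, the joint \eqref{eq:joint} forces $\gamma_r = 1/r_t^2$ and $\gamma_b = 1/\sigma_b^2$ almost surely. The coordinate updates \eqref{eq:qref} for $q(\gamma_r)$ and $q(\gamma_b)$ therefore return the same Dirac masses, so $\langle\gamma_r\rangle = 1/r_t^2$ and $\langle\gamma_b\rangle = 1/\sigma_b^2$. Proposition~\ref{prop:vb} then gives $\bm\mu = \bm\mu_{\text{post}}$ of \eqref{eq:mu_post} exactly. Here the model for $\xz$ is genuinely Gaussian, so the variational mean is the exact posterior mean.

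The same holds for the separable scheme of Corollary~\ref{cor:separable}. For a Gaussian target, the mean-field fixed point \eqref{eq:mur_i} satisfies
\[
\big(\tfrac{1}{\sigma_b^2}\Ab^\top\Ab + \tfrac{1}{r_t^2}\Iden\big)\bm\mu = \tfrac{1}{\sigma_b^2}\Ab^\top\yy + \tfrac{1}{r_t^2}\xhz ,
\]
so only the variances, not the mean, are affected by the diagonal restriction. I would note this in one line.

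\textbf{Rewriting the gap.} By \eqref{eq:score_closed}, the measurement term is $\frac{1}{r_t^2}(\partial\xhz/\partial\xt)^\top(\bm\mu_{\text{post}} - \xhz)$. In Algorithm~\ref{alg:FB-GDM} the outer factor is $(1-\abart)^{-1}$, which coincides with $1/r_t^2$ under the identification $r_t^2 = 1-\abart$ made in Appendix~\ref{app:closed_form_score1}. I would state this explicitly so that the Dirac centre and the outer factor agree. The key computation is then
\[
\bm\mu_{\text{post}} - \xhz = \bm\Sigma_{\text{post}}\tfrac{1}{\sigma_b^2}\Ab^\top(\yy - \Ab\xhz),
\]
obtained by subtracting $\bm\Sigma_{\text{post}}\bm\Sigma_{\text{post}}^{-1}\xhz$ from \eqref{eq:mu_post}. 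This is followed by the push-through (Woodbury) identity
\[
\big(\tfrac{1}{\sigma_b^2}\Ab^\top\Ab + \tfrac{1}{r_t^2}\Iden\big)^{-1}\tfrac{1}{\sigma_b^2}\Ab^\top = r_t^2\,\Ab^\top\big(\sigma_b^2\Iden + r_t^2\Ab\Ab^\top\big)^{-1}.
\]
This identity can be verified directly by multiplying both sides on the left by $\bm\Sigma_{\text{post}}^{-1}$ and on the right by $(\sigma_b^2\Iden + r_t^2\Ab\Ab^\top)$. Multiplying by $1/r_t^2$ cancels the $r_t^2$ and yields \eqref{eq:lim_FB-GDM} with unit scale.

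\textbf{The $\Pi$GDM side.} I would substitute \eqref{eq:pigdm_grad} into \eqref{eq:pigdm_score}. Transposing the row vector and using the symmetry of $(r_t^2\Ab\Ab^\top + \sigma_b^2\Iden)^{-1}$ gives \eqref{eq:lim_pigdm} immediately. Comparing the two expressions identifies the common measurement term and the scalar weights, $1$ versus $w\sqrt{\alpha_t\abart}\,r_t^2/\beta_t$.

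The main obstacle is not algebraic but one of bookkeeping. One must justify that the Dirac priors make the variational fixed point exact, particularly for the separable factorization, and keep the $r_t^2$ versus $(1-\abart)$ outer factor consistent. The Woodbury step itself is routine.
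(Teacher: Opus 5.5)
Your proposal is correct and follows the paper's proof. Like the paper, you fix $\langle\gamma_r\rangle = 1/r_t^2$ and $\langle\gamma_b\rangle = 1/\sigma_b^2$, rewrite $\frac{1}{r_t^2}(\bm\mu_{\text{post}}-\xhz)$ with the matrix inversion (push-through) identity, and read off the $\Pi$GDM side directly from \eqref{eq:pigdm_grad}--\eqref{eq:pigdm_score}. Your two additions are valid bookkeeping the paper leaves implicit: the separable mean-field fixed point has the exact posterior mean, and the outer factor $(1-\abart)^{-1}$ equals $1/r_t^2$ because the Dirac is centred at $\Pi$GDM's choice $r_t^2 = 1-\abart$.
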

\begin{proof}
Under these priors, $\langle\gamma_r\rangle = 1/r_t^2$ and $\langle\gamma_b\rangle = 1/\sigma_b^2$. Rewriting $\bm\mu_{\text{post}}$ \eqref{eq:mu_full} with the matrix inversion lemma, the measurement term $\tfrac{1}{r_t^2}(\bm\mu_{\text{post}} - \xhz)$ of \eqref{eq:score_closed} becomes
\begin{equation}
\frac{1}{r_t^2}\big(\bm\mu_{\text{post}} - \xhz\big) = \Ab^\top\!\left(\sigma_b^2\,\Iden + r_t^2\,\Ab\Ab^\top\right)^{-1}(\yy - \Ab\xhz),
\label{eq:pushthrough}
\end{equation}
hence \eqref{eq:lim_FB-GDM}. For $\Pi$GDM, \eqref{eq:pigdm_score} weights the measurement term \eqref{eq:pigdm_grad} by the factor $w\sqrt{\alpha_t\abart}\,r_t^2/\beta_t$, which gives \eqref{eq:lim_pigdm}.
\end{proof}
%
% \balance
\bibliographystyle{IEEEtran}
\bibliography{references}
\begin{IEEEbiography}[{\includegraphics[width=1in,height=1.25in,clip,keepaspectratio]{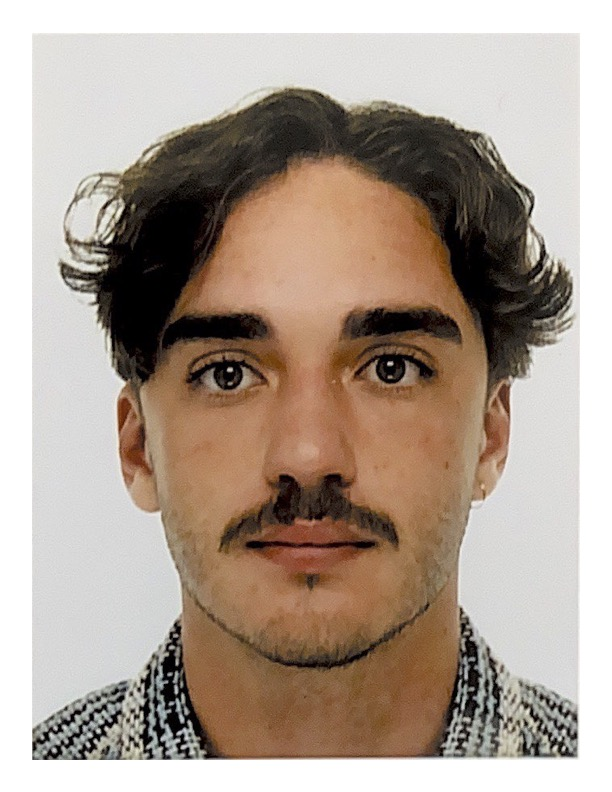}}]{Gatien Séguy}
was born in Rochefort, France, in 2002. He is currently pursuing the M.Sc. degree at the École Normale Supérieure Paris-Saclay, Université Paris-Saclay, Saclay, France.
\end{IEEEbiography}
\begin{IEEEbiography}[{\includegraphics[width=1in,height=1.25in,clip,keepaspectratio]{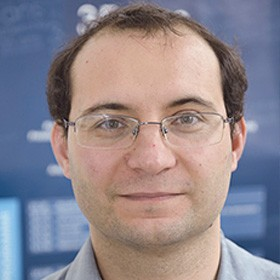}}]{Thomas Rodet}
was born in Lyon, France, in 1976. He received the Ph.D. degree from the Institut National Polytechnique de Grenoble, Grenoble, France, in 2002. He was an Assistant Professor with the University of Paris-Sud, Orsay, France, and a Researcher with the Laboratoire des Signaux et Systèmes (CNRS-Supelec-UPS), Gif-sur-Yvette, France, from 2003 to 2013. He is currently a Professor with the École Normale Supérieure Paris-Saclay, Saclay, France. His main research interests are tomography methods and Bayesian methods for inverse problems in astrophysical problems (inversion of data taken from space observatory: Spitzer, Herschel, SoHO, and STEREO).
\end{IEEEbiography}
\vfill
\end{document}